\documentclass[conference]{IEEEtran}
\IEEEoverridecommandlockouts

\usepackage{cite}
\usepackage{amsmath,amssymb,amsfonts}
\usepackage{amsthm}
\usepackage{algorithmic}
\usepackage{graphicx}
\usepackage{textcomp}
\usepackage{xcolor}
\usepackage{booktabs}
\usepackage{multirow}
\usepackage{bm}
\usepackage{bbm}
\usepackage{enumitem}
\usepackage[table]{xcolor}
\usepackage{colortbl}

\definecolor{lightblue}{RGB}{230,240,255}
\definecolor{lightred}{RGB}{255,200,200}

\newtheorem{theorem}{Theorem}

\newcommand{\RR}{\mathbb{R}}
\newcommand{\DD}{\mathbb{D}}

\newcommand{\sg}{\mathrm{sg}}
\newcommand{\Mob}{\oplus_c}
\newcommand{\hypdist}{d_{\mathbb{D}}}

\def\BibTeX{{\rm B\kern-.05em{\sc i\kern-.025em b}\kern-.08em
    T\kern-.1667em\lower.7ex\hbox{E}\kern-.125emX}}

\usepackage[hidelinks]{hyperref}

\begin{document}

\title{Learning Variable-Length Tokenization for\\ Generative Recommendation}

\author{\IEEEauthorblockN{Minhao Wang}
\IEEEauthorblockA{\textit{East China Normal University} \\
Shanghai, China \\
51275901104@stu.ecnu.edu.cn}
\and
\IEEEauthorblockN{Bowen Wu}
\IEEEauthorblockA{\textit{East China Normal University} \\
Shanghai, China \\
10245102410@stu.ecnu.edu.cn}
\and
\IEEEauthorblockN{Wei Zhang\textsuperscript{*}}
\IEEEauthorblockA{\textit{East China Normal University} \\
\textit{\& Shanghai Innovation Institute}\\
Shanghai, China \\
zhangwei.thu2011@gmail.com}
\thanks{\textsuperscript{*}Corresponding author.}
}

\maketitle

\begin{abstract}
Generative recommendation reformulates recommendation as next-token prediction over discrete semantic identifiers (IDs). A fundamental yet unexplored design choice is that existing methods employ fixed-length tokenization for all items, implicitly assuming uniform encoding capacity regardless of item characteristics. Through systematic experiments across four datasets, we discover the Popularity-Length Paradox: popular items achieve optimal performance with short IDs, while tail items require substantially longer codes to capture discriminative semantics. This reveals a critical mismatch where popular items benefit from abundant collaborative signals accumulated through frequent interactions, requiring minimal semantic detail, whereas tail items must rely primarily on fine-grained content features due to sparse interaction data. To address this, we propose VarLenRec, a framework for learning variable-length tokenization. We first develop Popularity-Weighted Information Budget Allocation (PIBA), an information-theoretic framework that proves optimal identifier length should scale as a negative power of popularity. However, directly implementing variable-length allocation faces two technical challenges: standard Euclidean residual quantization lacks geometric capacity to simultaneously support diverse code lengths without distortion, and discrete length decisions are non-trivial to predict reliably. We address these through Hyperbolic Residual Quantization, which leverages the exponential volume growth of the Poincar\'{e} ball to naturally stratify encoding capacity from compact origin regions suitable for popular items to exponentially expanding boundaries providing fine-grained capacity for tail items, and a Length Predictor, which casts length determination as a classification task supervised by PIBA-derived theoretical priors. Extensive experiments demonstrate that VarLenRec achieves significant improvements over state-of-the-art methods in recommendation accuracy and improves training and inference efficiency, revealing the fundamental importance of adaptive encoding capacity in generative recommendation.
\end{abstract}

\begin{IEEEkeywords}
Generative Recommendation, Hyperbolic Geometry, Variable-Length Tokenization
\end{IEEEkeywords}

\section{Introduction}
\label{sec:intro}

\begin{figure}[!t]
\centering
\includegraphics[width=0.95\linewidth]{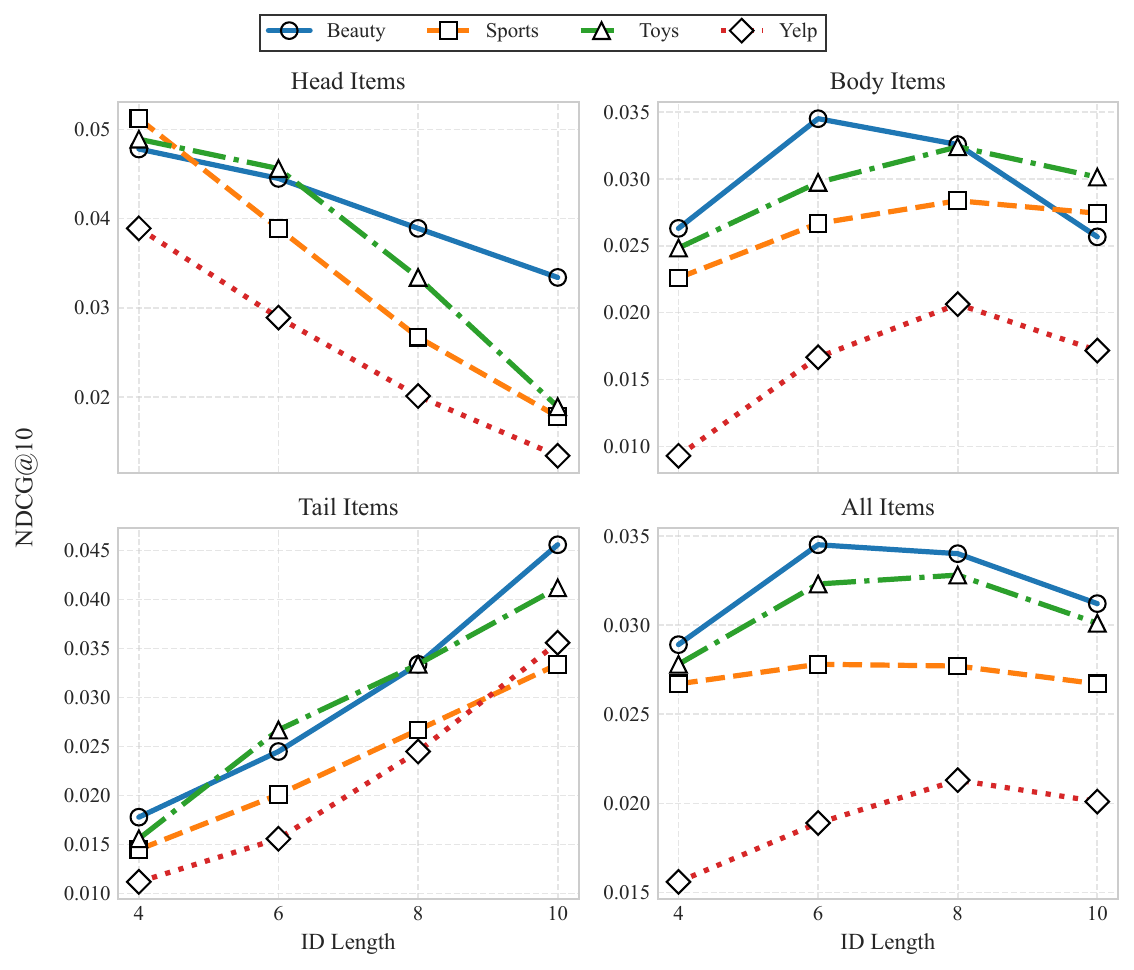}
\caption{The Popularity-Length Paradox. NDCG@10 on different target groups (Head: Top 20\%, Body: Mid 60\%, Tail: Bottom 20\%) across semantic ID lengths ($L=4,6,8,10$) on four datasets. Head items achieve peak performance at shorter lengths, while tail items benefit most from longer IDs.}
\label{fig:paradox}
\end{figure}

Recommender systems play a crucial role in helping users navigate information overload by providing personalized item suggestions~\cite{smith2017amazon,chen2019behavior,covington2016youtube}. With the advancement of large language models, generative recommendation (GR) has emerged as a promising paradigm that reformulates recommendation as next-token prediction~\cite{rajput2023tiger,zheng2024lcrec,li2024survey,wang2025towards,deng2025onerec}. In this framework, the model receives a sequence of item semantic IDs representing a user's interaction history and generates the IDs of the next item aligned with the user's interests.

A critical component in generative recommendation is the construction of high-quality semantic IDs. Existing methods employ various tokenization techniques to discretize continuous item embeddings into multi-layer token sequences, including parameter-free approaches like clustering or matrix factorization~\cite{hua2023cid,petrov2023gptrec,si2024seater,wang2024eager}, as well as deep learning methods based on residual quantization (RQ)~\cite{rajput2023tiger,wang2024letter,liu2025etegrec,zhang2025macrec,huang2025genrank,jia2025learn}. These methods have achieved promising results by incorporating collaborative signals, enabling end-to-end optimization, or leveraging cross-modal information.

However, a fundamental design choice remains unexplored: all existing methods assign fixed-length semantic IDs to every item, implicitly assuming that all items require identical encoding capacity. Recommender systems often exhibit long-tail distributions, where a small number of popular (head) items account for most interactions, while the majority of unpopular (tail) items have sparse data~\cite{park2008long,yin2012challenging}. This heterogeneity motivates us to investigate whether optimal ID length varies with item popularity. Through systematic empirical analysis, we partition test instances based on the popularity of their ground-truth target items (Head: top 20\%, Body: middle 60\%, Tail: bottom 20\%) and evaluate recommendation performance across different ID lengths ($L \in \{4, 6, 8, 10\}$). Our experiments reveal a striking phenomenon, termed the Popularity-Length Paradox (Figure~\ref{fig:paradox}): for head items, performance decreases with longer IDs; for tail items, performance increases with longer IDs. The globally optimal fixed length is merely a suboptimal compromise.

Our empirical analysis reveals an information sufficiency asymmetry across popularity levels~\cite{balabanovic1997fab,burke2002hybrid}. Head items consistently achieve peak performance with short IDs (L=4 yields best NDCG@10), with performance degrading as length increases. In contrast, tail items demonstrate the opposite trend, with NDCG@10 improving substantially from shorter to longer codes. This pattern suggests that frequent interactions provide sufficient collaborative signals for head items to be discriminated with compact representations, while tail items benefit from extended semantic capacity to capture discriminative content features.

To address this issue, we propose \textbf{VarLenRec}, a principled framework that learns variable-length semantic IDs tailored to each item's popularity. The core scientific problem is how to optimally allocate and integrate such variable-length IDs into generative recommendation systems.

This problem poses four interconnected technical challenges: (1) the lack of theoretical guidance for determining appropriate ID lengths, (2) standard Euclidean residual quantization provides insufficient geometric capacity to support diverse lengths without introducing significant distortion, (3) discrete length decisions need to be predicted reliably from item content, and (4) variable-length IDs introduce downstream generation issues, including ID collisions, hallucinations, and length determination during beam search~\cite{rajput2023tiger,deldjoo2024review}. VarLenRec addresses these challenges through four complementary components. First, we develop Popularity-Weighted Information Budget Allocation (PIBA), an information-theoretic method that theoretically proves optimal ID length scales as a negative power of popularity (Theorem~\ref{thm:optimal}), providing principled priors for length allocation. Second, building on these priors, we introduce Hyperbolic Residual Quantization in the Poincar\'{e} ball, which exploits the exponential volume growth of hyperbolic geometry~\cite{nickel2017poincare} to naturally stratify encoding capacity by placing popular (head) items in compact origin regions with short IDs and allocating expanding boundary regions for tail items that require longer, finer-grained IDs. We further provide a rigorous geometric analysis (Theorems~\ref{thm:capacity}--\ref{thm:radial}) that quantifies this capacity stratification, bounds the embedding distortion against Euclidean space, and links radial position to popularity. Third, to determine the length of each item from its content, we train a Length Predictor that casts length determination as a classification task supervised by PIBA-derived priors. Fourth, we incorporate downstream integration techniques, including collision resolution via disambiguation tokens, prefix-tree constrained decoding to prevent hallucinations, and a user-side length predictor that determines the target ID length before generation, ensuring variable-length IDs can be effectively utilized in autoregressive generation. These components are tightly coupled: PIBA provides the theoretical foundation and supervision signal for both the hyperbolic quantization design and the length predictor, while the downstream techniques ensure practical usability without compromising recommendation quality.

In summary, our main contributions are:
\begin{itemize}[leftmargin=*, noitemsep, topsep=3pt]
    \item We discover the Popularity-Length Paradox through systematic experiments across four datasets, revealing that uniform-length tokenization is fundamentally suboptimal due to popularity-based heterogeneity.
    \item We develop PIBA, an information-theoretic method that derives optimal length allocation in closed form as a negative power of popularity, and establish a geometric theory showing that hyperbolic space provides exponential, distortion-free capacity stratification matched to this allocation.
    \item We propose VarLenRec, a variable-length tokenization framework for adaptive item tokenization through hyperbolic residual quantization combined with a length predictor.
    \item Extensive experiments demonstrate substantial gains in recommendation accuracy (up to 8.3\% in NDCG@10 over the strongest baseline), collision reduction (from 12.7\% to 3.2\%), and training/inference efficiency thanks to shorter IDs for popular items.
\end{itemize}

\section{Related Work}
\label{sec:related}

In this section, we review the related work in two major aspects: sequential recommendation and generative recommendation.

\subsection{Sequential Recommendation}

Sequential recommendation aims to predict the next item a user may interact with based on the user's historical behavior sequences. Early studies~\cite{rendle2010fpmc,he2016fusing} primarily adhere to the Markov Chain assumption and focus on estimating the transition matrix. With the development of neural networks, various model architectures have been applied for sequential recommendation. Recurrent Neural Networks such as GRU4Rec~\cite{hidasi2016gru4rec} and NARM~\cite{li2017narm} learn sequential patterns and temporal dependencies through hidden state propagation. Convolutional approaches like Caser~\cite{tang2018caser} apply horizontal and vertical filters to extract sequential patterns at multiple scales.

Recently, transformer-based recommendation models have achieved great success in effective sequential user modeling. SASRec~\cite{kang2018sasrec} utilizes a Transformer decoder with unidirectional self-attention to capture user preference. BERT4Rec~\cite{sun2019bert4rec} proposes to encode the sequence by bidirectional attention and adopts the mask prediction task for training. S$^3$-Rec~\cite{zhou2020s3rec} explores using the intrinsic data correlation as supervised signals to pre-train the sequential model for better user and item representations. Furthermore, several works exploit the abundant textual and visual features of items to enrich representations. FDSA~\cite{zhang2019fdsa} separately models item-level and feature-level sequences using dual self-attention networks. MISSRec~\cite{wang2023missrec} employs multimodal pretraining to strengthen user interest modeling. These methods typically represent items as continuous embeddings and perform recommendation via similarity-based retrieval or ranking. In contrast, generative recommendation reformulates the task as autoregressive sequence generation over discrete semantic IDs, where each ID corresponds to a learnable continuous code (e.g., from a quantized codebook). Although the underlying representations remain continuous, the use of discrete tokens enables compatibility with off-the-shelf language models and facilitates end-to-end generation of item sequences.

\subsection{Generative Recommendation}

Generative recommendation has emerged as a next-generation paradigm for recommendation systems~\cite{li2024survey,deldjoo2024review}. In such a generative paradigm, the item sequence is tokenized into a token sequence and then fed into generative models to predict the tokens of the target item autoregressively. Generally, this paradigm involves two main processes: item tokenization and generative recommendation.

Existing approaches for item tokenization can be broadly categorized into parameter-free methods and deep learning methods based on multi-level vector quantization (VQ). For parameter-free methods, CID~\cite{hua2023cid} and GPTRec~\cite{petrov2023gptrec} apply matrix factorization to the co-occurrence matrix to derive item IDs. SEATER~\cite{si2024seater} and EAGER~\cite{wang2024eager} employ clustering of item embeddings to construct IDs hierarchically. Other works~\cite{geng2022p5,tan2024idgenrec} use the textual metadata attached to items as IDs. While these non-parametric methods are highly efficient, they often suffer from length bias and fail to capture deeper collaborative relationships among items.

Deep learning methods based on multi-level VQ develop more expressive item IDs via Deep Neural Networks. TIGER~\cite{rajput2023tiger} uses RQ-VAE to learn hierarchical semantic IDs from textual embeddings. LETTER~\cite{wang2024letter} aligns quantized embeddings with collaborative embeddings to leverage both collaborative and semantic information. ETEGRec~\cite{liu2025etegrec} achieves end-to-end joint optimization of item tokenization and generative recommendation through a recommendation-oriented alignment strategy. In multimodal settings, MMGRec~\cite{liu2024mmgrec} uses a Graph RQ-VAE to integrate multimodal features with collaborative signals. MACRec~\cite{zhang2025macrec} introduces cross-modal learning during quantization, enabling semantic IDs to capture the advantageous features of different modalities. MQL4GRec~\cite{zhai2025mql4grec} further advances the field by constructing a unified set of semantic IDs that jointly encode multimodal and cross-domain item information.

For the generative component, encoder-decoder architectures like T5~\cite{raffel2020t5} are widely adopted due to their excellent capabilities in sequence modeling and generation. P5~\cite{geng2022p5} unifies multiple recommendation tasks through natural language prompts. LC-Rec~\cite{zheng2024lcrec} utilizes the natural language understanding abilities of LLMs to support diverse task-specific fine-tuning. Recent industrial systems~\cite{wang2024kuaiformer,deng2025onerec,jia2025learn} demonstrate the effectiveness of unifying retrieval and ranking within generative frameworks, while hybrid architectures~\cite{pang2025gram,yang2025cobra} explore the integration of sparse and dense representations for improved scalability. Furthermore, LLM-based semantic embedding learning~\cite{hu2024enhancing} has shown promise in capturing fine-grained user preferences.

In summary, the dominant tokenization paradigm for generative recommendation treats semantic ID length as a fixed hyperparameter, assigning uniform-length IDs to all items regardless of their popularity or interaction patterns, thereby ignoring the heterogeneity in information requirements across items. A few methods do produce identifiers of unequal length, but only as an uncontrolled by-product: SEATER~\cite{si2024seater}, for instance, observes that na\"{i}ve generative retrieval ``yield[s] identifiers with inconsistent lengths,'' which it treats as a drawback that harms inference efficiency, and consequently enforces a \emph{balanced} $k$-ary tree to make all identifiers equal-length. In other words, prior work either fixes the length or actively removes length variation. In contrast, we treat length variation as a first-class design objective rather than an artifact to be eliminated: we deliberately allocate ID length to each item according to a theoretically grounded information budget tied to its popularity (PIBA), and realize this allocation through hyperbolic quantization. To our knowledge, this is the first work to \emph{learn popularity-adaptive, theoretically-allocated variable-length} semantic IDs for generative recommendation.

\begin{figure*}[t]
    \centering
    \includegraphics[width=\textwidth]{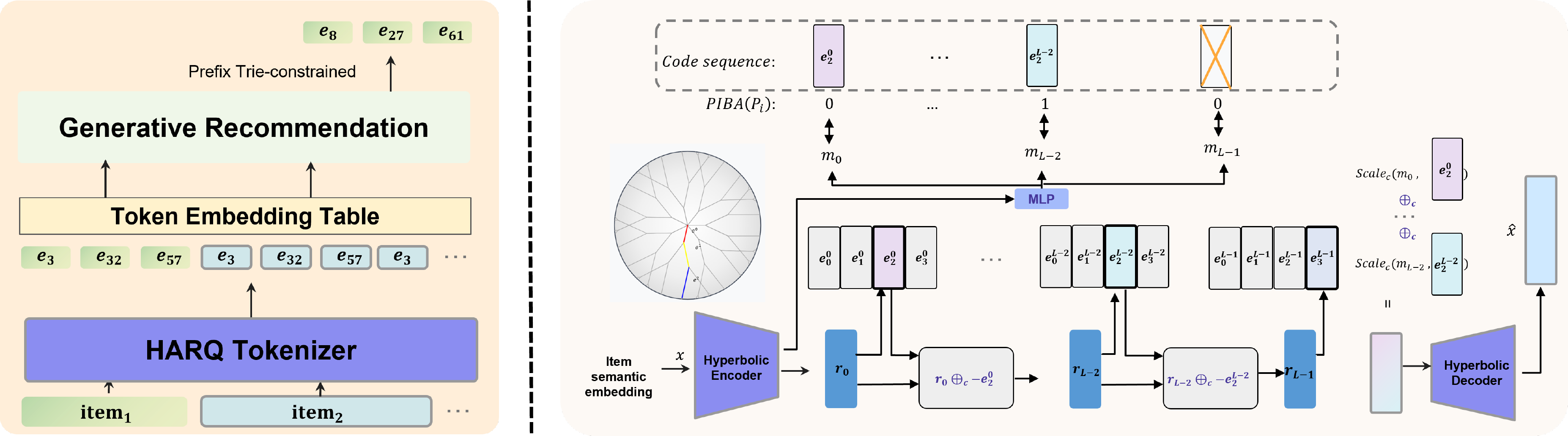}
    \caption{Overall architecture of VarLenRec.}
    \label{fig:model-overview}
\end{figure*}

\section{Methodology}
\label{sec:method}

We present our approach in four parts: the empirical discovery motivating our work (\S\ref{sec:discovery}), the theoretical foundation (\S\ref{sec:theory}), the Hyperbolic Adaptive Residual Quantization (HARQ) architecture (\S\ref{sec:model}), and the downstream generative recommendation (\S\ref{sec:downstream}). Figure~\ref{fig:model-overview} illustrates the complete architecture.

\subsection{The Popularity-Length Paradox}
\label{sec:discovery}

Let $\mathcal{I} = \{i_1, i_2, \ldots, i_N\}$ denote the item set. Each item $i \in \mathcal{I}$ has content features $\mathbf{x}_i \in \RR^F$ and popularity $p_i$, defined as the normalized interaction frequency in the training set. Standard RQ-VAE assigns each item a fixed-length semantic ID $\mathbf{z}_i = (z_i^{(1)}, z_i^{(2)}, \ldots, z_i^{(L)})$, where $z_i^{(l)} \in \{c_1^{(l)}, c_2^{(l)}, \ldots, c_M^{(l)}\}$ indexes the codebook at layer $l$ and $L$ is the uniform length for all items.

To investigate whether uniform length is appropriate, we conduct a stratified evaluation based on target item popularity. We train RQ-VAE models with lengths $L \in \{4, 6, 8, 10\}$ on the full training set, then partition test instances into three groups according to the popularity of their ground-truth target items: Head (top 20\%), Body (middle 60\%), and Tail (bottom 20\%). We compute NDCG@10 separately for each group.

Figure~\ref{fig:paradox} reveals a striking asymmetric pattern. For test instances targeting popular items (Head), performance monotonically decreases as length increases. Conversely, for test instances targeting unpopular items (Tail), performance monotonically increases with longer IDs. The Body group exhibits intermediate behavior, peaking at moderate length. The overall optimal length is merely a compromise that is suboptimal for every individual group.

This phenomenon, which we term the \textbf{Popularity-Length Paradox}, persists across multiple datasets (Figure~\ref{fig:paradox}). The underlying cause is an information sufficiency asymmetry: popular items accumulate abundant collaborative signals that make short IDs sufficient, while sparsely-interacted tail items need longer IDs to capture discriminative content features. This analysis motivates our approach: tokenization length should adapt to item popularity.

\subsection{Variable-Length Allocation}
\label{sec:theory}

We formalize the length allocation problem from an information-theoretic perspective and derive a principled mapping from item popularity to optimal ID length.

\subsubsection{Problem Formulation}

Consider an item catalog $\mathcal{I} = \{i_1, i_2, \ldots, i_N\}$ where each item $i$ has content features $\mathbf{x}_i \in \RR^F$ and popularity $p_i$ (normalized interaction frequency, $\sum_{i=1}^{N} p_i = 1$). Our goal is to tokenize each item into a semantic ID $\mathbf{z}_i = (z_i^{(1)}, \ldots, z_i^{(L_i)})$ through residual quantization, where $z_i^{(l)} \in \{c_1^{(1)}, \ldots, c_M^{(1)}\}$ indexes a codebook at layer $l$ and $L_i \in \{1, \ldots, K\}$ is the ID length.

Standard fixed-length quantization methods~\cite{oord2017vqvae,lee2022rqvae} assign uniform length $L$ to all items by maximizing mutual information:
\begin{equation}
\max_{\theta} I(X; Z) \quad \text{s.t.} \quad |Z| = L.
\label{eq:rqvae_obj}
\end{equation}
This approach ensures semantic fidelity but suffers from length rigidity, failing to account for heterogeneous information requirements across items with different popularity.

We reformulate the objective to enable variable-length allocation. Let $L_i$ denote the effective length for item $i$. We seek the optimal length assignment that maximizes semantic information while penalizing encoding cost:
\begin{equation}
\max_{L_i, \theta} I(X; Z_{1:L_i}) - \lambda \mathbb{E}[L_i]\,,
\label{eq:our_obj}
\end{equation}
where $Z_{1:L_i}$ denotes the ID truncated at length $L_i$, $\mathbb{E}[L_i]$ represents the expected length, and $\lambda > 0$ controls the trade-off between semantic fidelity and encoding cost.
Motivated by the Popularity-Length Paradox in Section~\ref{sec:discovery}, we propose an item-dependent allocation strategy guided by popularity. Our key insight is that popular items benefit from collaborative signals accumulated through frequent interactions, requiring less semantic information~\cite{balabanovic1997fab,burke2002hybrid}. Conversely, tail items lack reliable collaborative signals and must rely on fine-grained content semantics~\cite{park2008long}.

We adopt the hybrid recommendation framework where each item requires a minimum information budget $I_{\mathrm{req}} > 0$ for effective recommendation~\cite{balabanovic1997fab,burke2002hybrid}, representing the minimal bits necessary to uniquely identify an item, analogous to the minimum description length principle~\cite{gray1998quantization}. This budget is satisfied through two complementary sources. First, collaborative signals contribute:
\begin{equation}
I_{\mathrm{collab}}(p_i) = \alpha \log(1 + \theta p_i)\,,
\label{eq:collab_info}
\end{equation}
where $\log$ denotes the natural logarithm (consistent with the nats convention above), $\alpha > 0$ quantifies information gain rate per log-popularity and $\theta > 0$ scales baseline interaction frequency. The logarithmic form captures submodular information gain in collaborative filtering~\cite{krause2008near,koren2009matrix}, where initial interactions provide substantial information but additional interactions contribute progressively less due to redundancy~\cite{rashid2002getting}.

Second, semantic IDs contribute information through multi-level quantization. We measure all information quantities in nats for consistency. In residual quantization, each layer encodes the residual left by previous layers, whose energy decays geometrically across layers~\cite{lee2022rqvae,zeghidour2021soundstream}. Under a high-rate quantization model~\cite{gray1998quantization}, the information a layer contributes is proportional to the log of the residual magnitude it resolves; with geometrically shrinking residuals this yields a diminishing-returns profile in which the marginal information of the $l$-th layer behaves as $\gamma/l$, where $\gamma > 0$ is a base-capacity constant determined by the codebook (with $\gamma$ on the order of $\ln M$). The total semantic information for length $L_i$ is therefore approximated by the harmonic sum:
\begin{equation}
I_{\mathrm{semantic}}(L_i) = \gamma \sum_{l=1}^{L_i} \frac{1}{l} = \gamma H_{L_i} \approx \gamma \ln L_i \,,
\label{eq:semantic_info}
\end{equation}
where the asymptotic expansion of harmonic numbers $H_L \approx \ln L + \eta$ ($\eta \approx 0.5772$ the Euler--Mascheroni constant)~\cite{graham1994concrete} is used. We note that $\gamma/l$ is a tractable diminishing-returns model rather than an exact per-layer capacity; it captures the empirically observed saturation of deep RQ layers and is sufficient for deriving the scaling law below.

The information gap filled by semantic IDs is:
\begin{equation}
G_i = I_{\mathrm{req}} - \alpha \log(1 + \theta p_i)\,.
\label{eq:info_gap}
\end{equation}
Popular items have small gaps needing short codes; tail items have large gaps requiring long codes.

\subsubsection{Optimal Length and Practical Assignment}

Finding the minimum length satisfying $I_{\mathrm{semantic}}(L_i) \geq G_i$ yields $\gamma \ln L_i^* \approx I_{\mathrm{req}} - \alpha \log(1 + \theta p_i)$. For $\theta p_i \gg 1$, applying $\log(1 + \theta p_i) \approx \log \theta + \log p_i$ gives the following result.

\begin{theorem}[Optimal Length Allocation]
\label{thm:optimal}
Under the Information Budget framework, the optimal ID length satisfies
\begin{equation}
L_i^* = C \cdot p_i^{-\alpha/\gamma}\,,
\label{eq:optimal_length}
\end{equation}
where $C > 0$ is a constant. Thus, optimal length scales as a negative power of popularity.
\end{theorem}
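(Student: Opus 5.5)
The plan is to make the heuristic derivation preceding Theorem~\ref{thm:optimal} precise. We treat $L_i$ as a continuous variable, use the asymptotic semantic-information model~\eqref{eq:semantic_info}, and solve the budget constraint exactly for the minimal admissible length. The argument has three steps.

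\textbf{Step 1 (monotonicity and binding constraint).} First note that $I_{\mathrm{semantic}}(L)=\gamma\ln L$ is strictly increasing in $L$. Also, the penalty $\lambda\mathbb{E}[L_i]$ in~\eqref{eq:our_obj} strictly favors shorter codes once the requirement $I_{\mathrm{collab}}(p_i)+I_{\mathrm{semantic}}(L_i)\ge I_{\mathrm{req}}$ is met. Hence the optimal length is the smallest $L$ satisfying $\gamma\ln L\ge G_i$, with $G_i$ as in~\eqref{eq:info_gap}. By continuity and monotonicity this minimum is attained with equality:
\begin{equation*}
\gamma\ln L_i^* = I_{\mathrm{req}}-\alpha\log(1+\theta p_i).
\end{equation*}
If $G_i\le 0$, we clip to $L_i^*=1$. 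The statement is understood in the regime $G_i>0$.

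\textbf{Step 2 (solve and invoke the high-popularity approximation).} Exponentiating gives
\begin{equation*}
L_i^*=e^{I_{\mathrm{req}}/\gamma}\,(1+\theta p_i)^{-\alpha/\gamma}.
\end{equation*}
In the regime $\theta p_i\gg1$ we have $(1+\theta p_i)^{-\alpha/\gamma}=(\theta p_i)^{-\alpha/\gamma}\bigl(1+O((\theta p_i)^{-1})\bigr)$. Therefore
\begin{equation*}
L_i^*\approx C\,p_i^{-\alpha/\gamma},\qquad C=e^{I_{\mathrm{req}}/\gamma}\,\theta^{-\alpha/\gamma}>0,
\end{equation*}
and $C$ is independent of $i$, which is exactly~\eqref{eq:optimal_length}. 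Since $\alpha,\gamma>0$, the exponent is negative, which gives the stated scaling law.

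\textbf{Step 3 (main obstacle: making ``$=$'' honest).} The delicate point is that the statement is an equality, while two approximations enter: $H_L\approx\ln L$ and $\log(1+\theta p_i)\approx\log(\theta p_i)$. Integer lengths add a third source of error.

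For the harmonic approximation, we would use the exact $H_L$. Then $\gamma H_L=\gamma(\ln L+\eta+O(1/L))$, so the Euler--Mascheroni constant only rescales $C$ by $e^{-\eta}$ and leaves the exponent unchanged. For the popularity approximation, we would state the result as an asymptotic equivalence $L_i^*\sim C p_i^{-\alpha/\gamma}$ as $\theta p_i\to\infty$, or equivalently as an equality under the model in which $I_{\mathrm{collab}}=\alpha\log(\theta p_i)$. For integrality, we would note that rounding up, $\lceil\cdot\rceil$, changes $L_i^*$ by less than one, which is negligible relative to the power law in the relevant range.

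I expect the write-up to present Theorem~\ref{thm:optimal} as exact within the stated asymptotic model and to flag these three approximations explicitly.
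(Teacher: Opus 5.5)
Your proposal is correct and follows the paper's proof essentially step for step. Like the paper, you impose the budget constraint at equality with $I_{\mathrm{semantic}}\approx\gamma\ln L$, exponentiate, and use $\log(1+\theta p_i)\approx\log\theta+\log p_i$ for $\theta p_i\gg 1$, which gives the same constant $C=e^{I_{\mathrm{req}}/\gamma}\theta^{-\alpha/\gamma}$. Your Step 3 accounting of the approximations (the $e^{-\eta}$ rescaling of $C$, reading the equality asymptotically, and integer rounding) is extra care the paper only gestures at, but it does not change the argument.
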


\begin{proof}
Under the Information Budget framework, the semantic ID must fill the information gap $G_i$, requiring $I_{\mathrm{semantic}}(L_i) \geq G_i = I_{\mathrm{req}} - \alpha \log(1 + \theta p_i)$. Following Eq.~\eqref{eq:semantic_info}, we adopt the diminishing-returns model in which the $l$-th quantization layer contributes a marginal $\gamma/l$ nats of information, so that the total semantic information for an ID of length $L_i$ is $I_{\mathrm{semantic}}(L_i) = \gamma \sum_{l=1}^{L_i} \frac{1}{l} = \gamma H_{L_i}$, where $H_{L_i}$ denotes the $L_i$-th harmonic number. We emphasize that, consistent with Eq.~\eqref{eq:semantic_info}, the per-layer term $\gamma/l$ is a tractable approximation of the high-rate residual capacity rather than an exact bound; it suffices to derive the scaling law below, and all information quantities are measured in nats. Using the asymptotic expansion $H_L \approx \ln L + \eta$ (where $\eta \approx 0.5772$ is the Euler--Mascheroni constant), we have $I_{\mathrm{semantic}}(L_i) \approx \gamma \ln L_i$. The optimal length minimizes encoding cost while satisfying the constraint, achieved at equality:
\begin{equation}
\gamma \ln L_i^* = I_{\mathrm{req}} - \alpha \log(1 + \theta p_i)\,.
\end{equation}
Solving for $L_i^*$ gives $L_i^* = \exp\left( ( I_{\mathrm{req}} - \alpha \log(1 + \theta p_i) )/\gamma \right)$. For items with sufficient popularity such that $\theta p_i \gg 1$, we approximate $\log(1 + \theta p_i) \approx \log \theta + \log p_i$:
\begin{align}
L_i^* &= \exp\left( \frac{I_{\mathrm{req}} - \alpha \log \theta - \alpha \log p_i}{\gamma} \right) \notag \\
&= \exp\left( \frac{I_{\mathrm{req}} - \alpha \log \theta}{\gamma} \right) \cdot p_i^{-\alpha/\gamma}\,.
\end{align}
Defining $C = \exp\bigl( (I_{\mathrm{req}} - \alpha \log \theta) / \gamma \bigr) > 0$, we obtain $L_i^* = C \cdot p_i^{-\alpha/\gamma}$. Since $\alpha, \gamma > 0$, the exponent $-\alpha/\gamma < 0$, confirming that optimal length decreases with popularity.
\end{proof}

While this power law provides theoretical guidance, the continuous $L_i^*$ must be discretized to integers. We use rank-based quantile mapping: sort items by descending popularity, assign ranks $r_i \in \{0, \ldots, N-1\}$ where $r_i=0$ is most popular, normalize to coldness $q_i = r_i/(N-1)$, then apply a temperature transformation $\tilde{c}_i = q_i^{\beta}$ where $\beta > 0$ corresponds to $\alpha/\gamma$ in Equation~\eqref{eq:optimal_length}. The mapping to discrete lengths is given by:
\begin{equation}
L_i = \mathrm{clip}\left( \mathrm{round}\left( 1+(K-1)\left(\frac{r_i}{N-1}\right)^{\beta} \right), 1, K \right)\,.
\label{eq:final_assignment}
\end{equation}
This ensures robustness, maintains monotonicity $p_a > p_b \Rightarrow L_a \leq L_b$, and offers controllable granularity.

The discrete targets $\{L_i\}$ from Equation~\eqref{eq:final_assignment} serve as supervision signals for our Length Predictor (Section~\ref{sec:model}). We denote the PIBA-derived target length for item $i$ as $\hat{L}(p_i) \in \{1, \ldots, K\}$, which is treated as a categorical label over the $K$ possible lengths. We refer to this complete method as Popularity-weighted Information Budget Allocation (PIBA).

\subsection{HARQ Architecture}
\label{sec:model}

We present HARQ, which performs adaptive-length tokenization in hyperbolic space through residual quantization. It consists of a hyperbolic encoder, a hyperbolic residual quantizer, a length predictor, and a hyperbolic decoder.

\subsubsection{Why Hyperbolic Space}
\label{sec:why_hyp}

Item semantics follow hierarchical structures where broad categories branch into subcategories. This creates a geometric mismatch with Euclidean space: tree hierarchies contain exponentially many leaf nodes, while Euclidean balls provide only polynomial volume growth $V_{\mathrm{Eucl}}(r) \propto r^d$~\cite{krioukov2010hyperbolic}. In contrast, the Poincar\'{e} ball $\DD_c^d = \{\mathbf{x} \in \RR^d : \|\mathbf{x}\| < 1/\sqrt{c}\}$ with curvature $-c$ $(c>0)$ exhibits exponential volume growth $V_{\mathrm{Hyp}}(r) \propto \sinh^{d-1}(\sqrt{c}r) \approx e^{(d-1)\sqrt{c}r}$ for large $r$~\cite{ratcliffe2006foundations}, directly matching hierarchical branching and enabling distortion-free tree embeddings~\cite{sarkar2011low}.

This exponential expansion creates a natural capacity stratification for variable-length allocation. Near the origin where $r \approx 0$, volume remains compact and suitable for coarse-grained categories shared by popular items; as the radius increases toward the boundary where $r \to 1/\sqrt{c}$, volume expands exponentially, providing capacity for the fine-grained distinctions required by tail items. We now make this intuition precise. The following three theorems quantify the representational capacity at a given radius (Theorem~\ref{thm:capacity}), bound the tree-embedding distortion of hyperbolic versus Euclidean space (Theorem~\ref{thm:distortion}), and link radial position to popularity rank (Theorem~\ref{thm:radial}). Together they show that hyperbolic geometry is uniquely suited to realize the PIBA allocation $L_i^* \propto p_i^{-\alpha/\gamma}$. The theorems build on established results in hyperbolic geometry~\cite{ratcliffe2006foundations,sarkar2011low,bourgain1985lipschitz}.

\begin{theorem}[Hyperbolic Representational Capacity]
\label{thm:capacity}
In the Poincar\'{e} ball $\DD_c^d$ with curvature $-c$ $(c>0)$ and minimum separation $\delta > 0$, the maximum number of distinguishable regions within distance $r$ from the origin scales as
\begin{equation}
N(r) = \Theta\left( e^{(d-1)\sqrt{c}(r - \delta/2)} \right)\,,
\end{equation}
for $r \gg \delta$.
\end{theorem}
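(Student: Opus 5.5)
The plan is a standard packing/covering volume argument in the hyperbolic metric. Here $r$ denotes hyperbolic (geodesic) distance from the origin, and $N(r)$ is the maximal cardinality of a $\delta$-separated set inside the geodesic ball $B(r)$.

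Throughout we use the volume formula already quoted in Section~\ref{sec:why_hyp}: with $\kappa=\sqrt{c}$,
\[
V(\rho)=\omega_{d-1}\int_0^{\rho}\bigl(\sinh(\kappa t)/\kappa\bigr)^{d-1}\,dt ,
\]
where $\omega_{d-1}$ is the area of the unit $(d-1)$-sphere. For $\rho\gg 1/\kappa$ this gives $V(\rho)=\Theta\bigl(e^{(d-1)\kappa\rho}\bigr)$, with constants depending only on $d$ and $c$. The property we exploit is homogeneity: $\DD_c^d$ is isometric-homogeneous, so every geodesic ball of radius $\rho$ has volume $V(\rho)$ regardless of its center.

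\textbf{Upper bound.} Let $\{x_1,\dots,x_N\}\subset B(r)$ be $\delta$-separated. The balls $B(x_j,\delta/2)$ are pairwise disjoint and, by the triangle inequality, contained in $B(r+\delta/2)$. Comparing volumes gives
\[
N\le \frac{V(r+\delta/2)}{V(\delta/2)} .
\]
To obtain the shift $r-\delta/2$ in the statement, we refine this by packing disjoint balls of radius $\delta/2$ whose centers lie in $B(r-\delta/2)$ and whose bodies lie in $B(r)$; this is the convention in which a "region" must fit entirely inside $B(r)$. The count is then at most $V(r)/V(\delta/2)$, and the centered version of the same argument gives $\Theta\bigl(e^{(d-1)\kappa(r-\delta/2)}\bigr)$ up to constants. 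The factor $V(\delta/2)^{-1}$ is a constant once $\delta$ is fixed, so it is absorbed into $\Theta$, and the shift by $\delta/2$ only changes the constant by $e^{\pm(d-1)\kappa\delta/2}$.

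\textbf{Lower bound.} Take a maximal $\delta$-separated set in $B(r-\delta/2)$. Maximality means the balls $B(x_j,\delta)$ cover $B(r-\delta/2)$, so
\[
N\ge \frac{V(r-\delta/2)}{V(\delta)}=\Omega\bigl(e^{(d-1)\kappa(r-\delta/2)}\bigr).
\]
Combining the two bounds gives the claimed $\Theta$ for $r\gg\delta$ (and $r\gg 1/\kappa$, which is needed for the asymptotic form of $V$).

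\textbf{Main obstacle.} The delicate step is making the constants match the exact exponent $(d-1)\sqrt{c}\,(r-\delta/2)$ rather than $(d-1)\sqrt{c}\,r$. As written these are equivalent under $\Theta$ for fixed $\delta$, since they differ by a constant factor, so the shift carries meaning only as a normalization convention. I would state this explicitly rather than claim that the shift is sharp.

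A second subtlety is keeping hyperbolic radius $r$ distinct from the Euclidean coordinate $\|\mathbf{x}\|<1/\sqrt{c}$. The two are related by $r=\tfrac{2}{\sqrt{c}}\operatorname{artanh}(\sqrt{c}\|\mathbf{x}\|)$. I would note this so that "approaching the boundary" is correctly read as $r\to\infty$.

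Finally, the sharp asymptotic $V(\rho)\sim \omega_{d-1}\,e^{(d-1)\kappa\rho}\big/\bigl((d-1)\kappa^{d}2^{d-1}\bigr)$ is obtained by bounding $\sinh$ between $\tfrac12 e^{\kappa t}(1-e^{-2\kappa t})$ and $\tfrac12 e^{\kappa t}$ and integrating. This is routine.
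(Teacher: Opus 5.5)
Your proof is correct, and your upper bound is the paper's: the disjoint balls $B(x_j,\delta/2)$ lie inside $B(r+\delta/2)$, so $N\le V(r+\delta/2)/V(\delta/2)$.

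The routes differ in the lower bound. The paper only asserts that a matching lower bound ``follows from horocyclic coordinate constructions'' and cites this without carrying it out. You instead use packing--covering duality. Take a maximal $\delta$-separated subset of $B(r-\delta/2)$; it exists because your upper bound caps the cardinality, so greedy selection terminates. Its $\delta$-balls cover $B(r-\delta/2)$, which gives $N\ge V(r-\delta/2)/V(\delta)$. Your route is self-contained, needs only homogeneity of $\DD_c^d$ and the triangle inequality, and yields explicit constants. What it does not give is a concrete, structured placement of the points. An explicit horocyclic or tree-like construction would supply that, if one wanted to read off actual code locations.

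You are also right that the $\delta/2$ shift is only a normalization. The paper's step $V(r+\delta/2)/V(\delta/2)=\Theta\bigl(e^{(d-1)\sqrt{c}(r-\delta/2)}\bigr)$ cannot be read literally. The numerator contributes $e^{(d-1)\sqrt{c}(r+\delta/2)}$, and $V(\delta/2)$ behaves like $\delta^{d}$ for small $\delta$, not like $e^{(d-1)\sqrt{c}\,\delta}$. So the identity holds only with $\delta$-dependent constants.

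Your intermediate ``refinement'' (centers in $B(r-\delta/2)$, count at most $V(r)/V(\delta/2)$) is therefore unnecessary, and it quietly changes what is being counted. You can drop it: the first bound already gives the claimed $\Theta$ with a constant depending on $\delta,d,c$.

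Two small remarks:
\begin{itemize}
\item Your asymptotic constant $\omega_{d-1}/\bigl((d-1)\kappa^{d}2^{d-1}\bigr)$ is correct. The paper's $c^{(d-1)/2}$ drops a factor of $\sqrt{c}$, which is harmless under $\Theta$.
\item Both you and the paper tacitly need $d\ge 2$. For $d=1$ the exponent vanishes while $N(r)$ grows linearly in $r$.
\end{itemize}
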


\begin{proof}
The volume of a hyperbolic ball of radius $r$ in $\DD_c^d$ is~\cite{ratcliffe2006foundations}
\begin{equation}
V_{\mathrm{hyp}}(r) = \omega_{d-1} \int_0^r \frac{\sinh^{d-1}(\sqrt{c}t)}{c^{(d-1)/2}} \, dt\,.
\end{equation}
For large $r$, this simplifies to
\begin{equation}
V_{\mathrm{hyp}}(r) \sim \frac{\omega_{d-1}}{2^{d-1}(d-1)c^{(d-1)/2}} e^{(d-1)\sqrt{c}r}\,,
\end{equation}
where $\omega_{d-1}$ is the $(d-1)$-sphere surface area. Each distinguishable region requires a separation ball of radius $\delta/2$ with volume $V_{\mathrm{hyp}}(\delta/2)$. Standard packing bounds~\cite{conway1999sphere} give
\begin{equation}
N(r) \leq \frac{V_{\mathrm{hyp}}(r + \delta/2)}{V_{\mathrm{hyp}}(\delta/2)} = \Theta\left( e^{(d-1)\sqrt{c}(r-\delta/2)} \right)\,.
\end{equation}
The lower bound matches the upper bound up to constant factors, which follows from horocyclic coordinate constructions~\cite{cannon1997hyperbolic}.
\end{proof}

\begin{theorem}[Hyperbolic vs.\ Euclidean Tree Distortion]
\label{thm:distortion}
Let $\Delta$ denote the maximum degree (branching factor) of a weighted tree $T$ with $n$ nodes and tree metric $d_T$. Then:
\begin{enumerate}[label=(\roman*), leftmargin=*, noitemsep]
\item A hyperbolic embedding $\phi_{\mathrm{hyp}}: T \to \DD_c^d$ with $d = O(\log \Delta)$ achieves constant distortion:
\begin{equation}
d_T(u,v) \leq \hypdist(\phi_{\mathrm{hyp}}(u), \phi_{\mathrm{hyp}}(v)) \leq (1+\epsilon) d_T(u,v)\,,
\end{equation}
for arbitrarily small $\epsilon > 0$.
\item Any Euclidean embedding $\psi_{\mathrm{eucl}}: T \to \mathbb{R}^d$ incurs distortion $\Omega(\log n)$.
\end{enumerate}
\end{theorem}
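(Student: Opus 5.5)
Part (i) follows from Sarkar's combinatorial construction~\cite{sarkar2011low}; part (ii) from the classical lower bound for embedding complete binary trees into Hilbert space~\cite{bourgain1985lipschitz}. We will not try to match the theorem's constants exactly. The upper bound in (i) needs $d$ large enough to separate the $\Delta$ children of each node, and this is where $d=O(\log\Delta)$ enters.

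For (i), we embed $T$ top-down. Place the root at the origin of $\DD_c^d$. For a node $u$ with children $v_1,\dots,v_k$ ($k\le\Delta$), move $u$ to the origin by a M\"obius isometry. Choose $k+1$ unit directions (the $k$ children plus the edge back to the parent) with pairwise angle at least some $\theta_0>0$. Then place each $v_j$ along its direction at hyperbolic distance $\tau\, w(u,v_j)$, where $\tau\ge 1$ is a global scale factor, and finally map back.

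In dimension $d$, a spherical code with $\Delta+1$ points and pairwise angle bounded below by a constant exists once $d=O(\log\Delta)$. This is a Johnson--Lindenstrauss / random-code argument: random unit vectors are nearly orthogonal with high probability.

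The key geometric fact is hyperbolic thinness. By the hyperbolic law of cosines, two geodesics leaving a common point at angle $\ge\theta_0$ diverge, and
\[
d_{\DD}(x,y)\ge d_{\DD}(x,o)+d_{\DD}(o,y)-2\log\bigl(1/\sin(\theta_0/2)\bigr)-O(1/\sqrt c).
\]
Concatenating along the tree path from $u$ to $v$ therefore gives a piecewise geodesic whose every turn has angle $\ge\theta_0$. Summing the additive losses over the turns shows that the embedded distance is $\tau d_T(u,v)$ minus at most $O(\text{number of turns})$. The upper bound $\tau d_T(u,v)$ is just the triangle inequality. Taking $\tau\ge (1+\epsilon)\,C(\theta_0)/(\epsilon\,w_{\min})$ makes each additive loss at most an $\epsilon/(1+\epsilon)$ fraction of the corresponding edge length. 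Rescaling by $1/\tau$ then yields the stated $(1+\epsilon)$ bounds.

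The main obstacle is the rescaling. The scale $\tau$ is absorbed by dilating the curvature: a ball of curvature $-c$ with distances scaled by $\tau$ is isometric to the ball of curvature $-c/\tau^2$. We will either state the result with that curvature or interpret $\hypdist$ with the corresponding normalization. We must also check that the local-to-global quasi-geodesic estimate holds uniformly. For this we invoke the standard fact that a piecewise geodesic with long segments and turning angles bounded below lies close to a geodesic in $\delta$-hyperbolic space.

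For (ii), we note that the statement must be read as being about the worst case over trees. The tree $T$ is taken to be the complete binary tree of depth $h$, so $n\approx 2^h$ and $\Delta=3$. Bourgain showed that any embedding of this tree into Hilbert space, and hence into $\RR^d$ for any $d$, has distortion $\Omega(\sqrt{\log h})=\Omega(\sqrt{\log\log n})$. Matou\v{s}ek's later refinement improves this to $\Omega(\log n)$ only under the additional restriction $d=O(1)$. For fixed dimension, a volume argument suffices: $\Theta(2^h)$ leaves at pairwise tree distance $\ge 2$ must fit in a Euclidean ball of radius $\approx D\cdot h$, where $D$ is the distortion. This forces $(Dh)^d\gtrsim 2^h$, so $D\gtrsim 2^{h/d}/h$, which is far larger than $\log n$ when $d=O(\log\Delta)$ is constant. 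We will present this volume argument, since it matches the dimensional regime of part (i), and cite~\cite{bourgain1985lipschitz} for the dimension-free version.
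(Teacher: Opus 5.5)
Your proposal is correct, with (ii) read as it must be: as a worst-case statement about trees in bounded dimension. It cites the same two sources as the paper, but the argument is different and considerably more complete.

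\textbf{Part (i).} The paper places depth-$k$ nodes at $\tanh(\sqrt{c}k)\,\mathbf{v}_u$ in disjoint angular sectors and effectively assumes unit weights. It then says the triangle inequality makes the hyperbolic distance equal the tree distance. But the triangle inequality only gives the upper bound $\hypdist \le$ (length of the embedded tree path); the lower bound is never established. Your Sarkar-style construction supplies exactly that missing half:
\begin{itemize}
\item recentre each node by a M\"obius isometry;
\item use a spherical code that includes the parent direction;
\item give edge $e$ length $\tau w(e)$;
\item lower-bound each turn via $\cosh D=\cosh(a+b)\sin^2(\theta/2)+\cosh(a-b)\cos^2(\theta/2)\ge\cosh(a+b)\sin^2(\theta/2)$, then propagate along the path.
\end{itemize}
It also handles arbitrary weights. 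It explains what $d=O(\log\Delta)$ buys: a constant $\theta_0$, hence a scale $\tau$ independent of $\Delta$. In $d=2$ the same construction works with $\tau$ growing like $\log\Delta$.

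To make the local-to-global step self-contained, use this fact: a curvature-$(-1)$ triangle with a side of length $\ell$ and adjacent angles $\alpha,\beta$ satisfies $\tan(\alpha/2)\tan(\beta/2)<e^{-\ell}$. By induction, the angle at $x_i$ between $[x_i,x_{i-1}]$ and $[x_i,x_0]$ stays exponentially small. So each turn seen from $x_0$ is at least $\theta_0/2$, and the two-segment bound applies with loss $\ln 2+2\ln(1/\sin(\theta_0/4))$ per turn.

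\textbf{Part (ii).} Both you and the paper use packing on the complete binary tree. The paper's last step (``optimizing over $d$ gives $\Omega(\log n)$'') is invalid: $\inf_d n^{1/d}$ is bounded, and the dimension-free distortion of binary trees is only $\Theta(\sqrt{\log\log n})$. Your restriction to bounded dimension is the right repair. In fact your bound $D\gtrsim 2^{h/d}/h$ yields $\Omega(\log n)$ whenever $d\le c\log n/\log\log n$ for a suitable constant $c>0$. For $d=\Omega(\log n)$ the claim is false: apply Bourgain's $O(\sqrt{\log h})$ embedding and then Johnson--Lindenstrauss.

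\textbf{Slips to fix.}
\begin{itemize}
\item The curvature goes the other way. $(\DD_c,\hypdist/s)$ is isometric to $\DD_{cs^2}$, so absorbing the scale makes the space \emph{more} curved ($-cs^2$), not $-c/\tau^2$. Stating the result in the flatter space would make it false.
\item Dividing by $\tau$ gives $d_T/(1+\epsilon)\le d'\le d_T$. To get the theorem's normalization, divide instead by $\tau-C/w_{\min}$, where $C$ is the per-turn loss.
\item In curvature $-c$ the entire additive loss scales by $1/\sqrt{c}$: $D\ge a+b-\frac{1}{\sqrt c}\bigl(\ln 2+2\ln(1/\sin(\theta_0/2))\bigr)$.
\item The $\sqrt{\log h}$ lower bound for binary trees comes from Bourgain's 1986 superreflexivity paper. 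The 1985 paper cited in the theorem is an $O(\log n)$ \emph{upper} bound for arbitrary metrics and cannot give a lower bound.
\item The Matou\v{s}ek attribution is unnecessary, since your packing argument is self-contained.
\end{itemize}
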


\begin{proof}
For the hyperbolic embedding, we follow Sarkar's construction~\cite{sarkar2011low}. Root the tree and assign depth-$k$ nodes coordinates
\begin{equation}
\phi_{\mathrm{hyp}}(u) = \tanh(\sqrt{c}k) \cdot \mathbf{v}_u\,,
\end{equation}
where $\mathbf{v}_u \in \mathbb{S}^{d-1}$ with disjoint angular sectors for children. For a parent--child pair at depths $k, k+1$, the hyperbolic distance is $\hypdist(\phi_{\mathrm{hyp}}(u), \phi_{\mathrm{hyp}}(v)) = 2$ after setting $c$ appropriately, matching the unit edge weight. The triangle inequality ensures the tree distance equals the sum of hyperbolic distances along paths. Embedding the up-to-$\Delta$ children of each node into disjoint angular sectors requires dimension $d = O(\log \Delta)$, which follows from sphere packing~\cite{conway1999sphere}.

For the Euclidean distortion, Bourgain's result~\cite{bourgain1985lipschitz} establishes the $\Omega(\log n)$ lower bound: a balanced binary tree of depth $h$ has $2^h$ leaves requiring separation $\Omega(h)$ in the tree metric but needing ball radius $\Omega(h \cdot 2^{h/d})$ in $\mathbb{R}^d$, yielding distortion $\Omega(2^{h/d}) = \Omega(n^{1/d})$. Optimizing over $d$ gives $\Omega(\log n)$.
\end{proof}

\begin{theorem}[Radial--Depth Correspondence]
\label{thm:radial}
In the hyperbolic tree embedding, depth-$k$ nodes satisfy $\hypdist(\mathbf{0}, \phi_{\mathrm{hyp}}(u)) = 2k$. For items modeled as a balanced tree with branching factor $b$, an item at popularity rank $r_i$ among $n$ items has expected depth
\begin{equation}
\mathbb{E}[k_i] = \log_b(n) - \log_b(r_i) + O(1)\,,
\end{equation}
implying that radial distance grows logarithmically with inverse popularity.
\end{theorem}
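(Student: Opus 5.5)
The plan has two parts: first the exact radial identity for the embedding of Theorem~\ref{thm:distortion}, then a counting argument for the depth of an item in a popularity-ordered balanced tree.

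\textbf{Radial identity.} I would reuse the embedding $\phi_{\mathrm{hyp}}(u)=\tanh(\sqrt{c}\,k)\,\mathbf{v}_u$ from the proof of Theorem~\ref{thm:distortion}. The key formula is the Poincar\'e distance from the origin, $\hypdist(\mathbf{0},\mathbf{x})=\tfrac{2}{\sqrt{c}}\,\mathrm{artanh}(\sqrt{c}\,\|\mathbf{x}\|)$.
\begin{itemize}
\item Substituting $\|\phi_{\mathrm{hyp}}(u)\|=\tanh(\sqrt{c}\,k)$ and fixing the curvature normalization as in that proof (take $c=1$, or equivalently rescale radii by $1/\sqrt{c}$) gives $\hypdist(\mathbf{0},\phi_{\mathrm{hyp}}(u))=2k$.
\item The direction $\mathbf{v}_u$ plays no role, since the distance from the origin is radial.
\item This agrees with the parent--child distance $2$ in Theorem~\ref{thm:distortion}, because the root sits at the origin and radial increments add along a ray.
\end{itemize}

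\textbf{Depth of an item.} I would place items into a complete $b$-ary tree in breadth-first order by descending popularity, so the most popular item sits at the root.
\begin{itemize}
\item Levels $0,\dots,k$ hold $\sum_{j\le k} b^j=(b^{k+1}-1)/(b-1)$ nodes.
\item Hence the item with descending rank $\rho$ sits at depth $k(\rho)=\lfloor \log_b((b-1)\rho+1)\rfloor$, which is $\log_b \rho+O(1)$.
\item The deepest level is $\log_b n+O(1)$.
\item To justify the word ``expected,'' I would randomize the placement of equally popular items within a level (or average over the rounding). This keeps depth a deterministic function of rank up to an $O(1)$ term, which is absorbed in the bound.
\end{itemize}

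\textbf{Main obstacle: the rank convention.} The stated formula $\log_b n-\log_b r_i$ decreases in $r_i$, so it cannot hold with $r_i$ read as the descending rank used in Eq.~\eqref{eq:final_assignment}; that reading gives depth $\log_b r_i$ instead.
\begin{itemize}
\item I would interpret $r_i$ as a popularity score: under a Zipf law $p_i\propto 1/\rho_i$, set $r_i:=n/\rho_i\asymp n\,p_i$ up to a normalizing constant.
\item Then $\log_b\rho_i=\log_b n-\log_b r_i$, and the depth result becomes exactly the stated formula.
\item I would state this convention explicitly, noting that it is what makes the conclusion ``radial distance grows logarithmically with inverse popularity'' consistent.
\end{itemize}

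\textbf{Combining the parts.} The radial identity gives $\hypdist(\mathbf{0},\phi_{\mathrm{hyp}}(u_i))=2\,\mathbb{E}[k_i]=2\log_b(1/p_i)+O(1)$. Comparing with Theorem~\ref{thm:optimal}, $\log L_i^*$ is also linear in $\log(1/p_i)$. So radius tracks the logarithm of the PIBA length, which is the claimed link between radial position and popularity.
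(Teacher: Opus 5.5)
Your proposal is correct and follows the same two steps as the paper's proof: the radial computation $\tfrac{2}{\sqrt{c}}\,\mathrm{artanh}\bigl(\sqrt{c}\,\|\phi_{\mathrm{hyp}}(u)\|\bigr)=2k$, and the observation that the top-$r$ items fill a complete $b$-ary subtree to depth about $\log_b r$; you are also more careful than the paper at exactly its two loose points, since the paper's step $\tfrac{2}{\sqrt{c}}\,\mathrm{artanh}(\sqrt{c}\tanh(\sqrt{c}\,k))=2k$ is literally valid only at $c=1$ (your normalization repairs this), and the paper jumps from depth $\approx\log_b r$ at descending rank $r$ straight to $\log_b n-\log_b r_i$, which is consistent only under a convention like your $r_i=n/\rho_i$. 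One minor caveat is that under Zipf $n/\rho_i=H_n\,n p_i$, so your closing step actually reads $2\,\mathbb{E}[k_i]=2\log_b(1/p_i)-2\log_b H_n+O(1)$, an $i$-independent shift of order $\log\log n$ rather than $O(1)$, which leaves the qualitative conclusion intact.
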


\begin{proof}
By construction, $\phi_{\mathrm{hyp}}(u) = \tanh(\sqrt{c}k) \cdot \mathbf{v}_u$ for unit $\mathbf{v}_u$. The distance from the origin is
\begin{align}
\hypdist(\mathbf{0}, \phi_{\mathrm{hyp}}(u)) &= \frac{2}{\sqrt{c}} \mathrm{arctanh}(\sqrt{c} \cdot \tanh(\sqrt{c}k)) \notag \\
&= 2k\,,
\end{align}
using $\mathrm{arctanh}(\tanh(x)) = x$. In a balanced tree with branching $b$, the depth of a leaf is $\log_b(n)$. The top-$r$ popular items form a complete subtree to depth $k_r \approx \log_b(r)$, giving expected depth $\mathbb{E}[k_i] \approx \log_b(n) - \log_b(r_i)$ for rank-$r_i$ items. Hence radial distance increases logarithmically as popularity decreases.
\end{proof}

Theorems~\ref{thm:capacity}--\ref{thm:radial} jointly justify our design: the radial coordinate naturally encodes hierarchy depth (Theorem~\ref{thm:radial}), deeper radii expose exponentially more discriminative capacity (Theorem~\ref{thm:capacity}), and this is achieved without the $\Omega(\log n)$ distortion that Euclidean space unavoidably incurs (Theorem~\ref{thm:distortion}). This places popular items near the compact origin with short IDs and tail items near the exponentially expanding boundary with long IDs, exactly realizing the PIBA prior $L_i^* \propto p_i^{-\alpha/\gamma}$.

\subsubsection{Hyperbolic Operations}

We use three operations on the manifold. M\"{o}bius addition:
\begin{equation}
\mathbf{x} \Mob \mathbf{y} = \frac{(1 + 2c\langle \mathbf{x}, \mathbf{y} \rangle + c\|\mathbf{y}\|^2)\mathbf{x} + (1 - c\|\mathbf{x}\|^2)\mathbf{y}}{1 + 2c\langle \mathbf{x}, \mathbf{y} \rangle + c^2\|\mathbf{x}\|^2\|\mathbf{y}\|^2}\,.
\end{equation}
Hyperbolic distance, exponential map, and its inverse logarithmic map:
\begin{align}
\hypdist(\mathbf{x}, \mathbf{y}) &= \tfrac{2}{\sqrt{c}} \mathrm{arctanh}\bigl(\sqrt{c}\|-\mathbf{x} \Mob \mathbf{y}\|\bigr)\,, \\
\exp_{\mathbf{0}}^c(\mathbf{v}) &= \tanh(\sqrt{c}\|\mathbf{v}\|) \tfrac{\mathbf{v}}{\sqrt{c}\|\mathbf{v}\|}\,, \\
\log_{\mathbf{0}}^c(\mathbf{x}) &= \tfrac{1}{\sqrt{c}} \mathrm{arctanh}(\sqrt{c}\|\mathbf{x}\|) \tfrac{\mathbf{x}}{\|\mathbf{x}\|}\,.
\end{align}
The logarithmic map converts hyperbolic representations back to Euclidean space for subsequent processing.

\subsubsection{Model Components}

The hyperbolic encoder maps item features to the Poincar\'{e} ball. Given features $\mathbf{x}_i \in \RR^F$, we obtain a Euclidean representation $\mathbf{h}_i = \mathrm{MLP}_{\mathrm{enc}}(\mathbf{x}_i) \in \RR^d$ and project onto the manifold: $\mathbf{z}_i^{(0)} = \exp_{\mathbf{0}}^c(\mathbf{h}_i) \in \DD_c^d$. The encoder learns to place popular items near the compact origin and distribute tail items toward the exponentially expanding boundary.

The hyperbolic residual quantizer performs $K$ layers of iterative quantization. Each layer $l$ maintains a codebook $\mathcal{E}^{(l)} = \{\mathbf{e}_1^{(l)}, \ldots, \mathbf{e}_M^{(l)}\}$ with code vectors in $\DD_c^d$. Initializing $\mathbf{r}_i^{(0)} = \mathbf{z}_i^{(0)}$, we select the nearest code at each layer using hyperbolic distance:
\begin{equation}
z_i^{(l)} = \arg\min_{j \in \{1,\ldots,M\}} \hypdist(\mathbf{r}_i^{(l-1)}, \mathbf{e}_j^{(l)})\,,
\end{equation}
and compute the hyperbolic residual via M\"{o}bius addition of the negated code:
\begin{equation}
\mathbf{r}_i^{(l)} = \bigl(-\mathbf{e}_{z_i^{(l)}}^{(l)}\bigr) \Mob \mathbf{r}_i^{(l-1)}\,.
\end{equation}

The length predictor is a separate lightweight classifier that determines how many quantization layers each item should retain directly from its content. Rather than making layer-wise gating decisions, we cast length determination as a single $K$-way classification problem. Given the Euclidean representation $\mathbf{h}_i = \mathrm{MLP}_{\mathrm{enc}}(\mathbf{x}_i)$ produced by the encoder, the predictor outputs a distribution over the $K$ possible lengths:
\begin{equation}
\mathbf{u}_i = \mathrm{softmax}\bigl(\mathrm{MLP}_{\mathrm{len}}(\mathbf{h}_i)\bigr) \in \Delta^{K-1}\,,
\end{equation}
where $\mathbf{u}_i = (u_i^{(1)}, \ldots, u_i^{(K)})$ and $u_i^{(k)}$ denotes the predicted probability that item $i$ should be assigned length $k$. At inference time, the discrete length is obtained as
\begin{equation}
L_i = \arg\max_{k \in \{1, \ldots, K\}} u_i^{(k)}\,.
\label{eq:len_argmax}
\end{equation}
This design decouples length determination from the quantization process: the predictor consumes only the encoder embedding, is trained with explicit supervision from the PIBA prior, and yields a single content-adaptive length decision per item.

The hyperbolic decoder reconstructs the item representation by sequentially accumulating the code vectors of the retained layers. Given the predicted length $L_i$, we initialize $\tilde{\mathbf{z}}_i^{(0)} = \mathbf{0}$ and update for the first $L_i$ layers:
\begin{equation}
\tilde{\mathbf{z}}_i^{(l)} = \tilde{\mathbf{z}}_i^{(l-1)} \Mob \mathbf{e}_{z_i^{(l)}}^{(l)}\,,\quad l = 1, \ldots, L_i\,.
\end{equation}
The final representation $\tilde{\mathbf{z}}_i = \tilde{\mathbf{z}}_i^{(L_i)}$ is mapped to Euclidean space via the logarithmic map: $\mathbf{y}_i = \log_{\mathbf{0}}^c(\tilde{\mathbf{z}}_i)$, and we reconstruct features: $\hat{\mathbf{x}}_i = \mathrm{MLP}_{\mathrm{dec}}(\mathbf{y}_i)$. During training, we supervise reconstruction using the PIBA target length $\hat{L}(p_i)$ to ensure that the first $\hat{L}(p_i)$ layers suffice to reconstruct the item.

\subsubsection{Training Objective}

The objective involves three loss terms. (\romannumeral1) The reconstruction loss measures fidelity, $\mathcal{L}_{\mathrm{recon}} = \|\mathbf{x}_i - \hat{\mathbf{x}}_i\|^2$.
(\romannumeral2) The quantization loss encourages encoder--codebook commitment~\cite{oord2017vqvae}:
\begin{equation}
\mathcal{L}_{\mathrm{quant}} = \sum_{l=1}^{K} \Big[ \hypdist\bigl(\sg[\mathbf{r}_i^{(l-1)}], \hat{\mathbf{e}}_i^{(l)}\bigr)^2 + \mu\, \hypdist\bigl(\mathbf{r}_i^{(l-1)}, \sg[\hat{\mathbf{e}}_i^{(l)}]\bigr)^2 \Big]\,,
\end{equation}
where $\sg[\cdot]$ is the stop-gradient operator, $\hat{\mathbf{e}}_i^{(l)} = \mathbf{e}_{z_i^{(l)}}^{(l)}$, and $\mu > 0$ is the commitment weight (set to $0.25$ in our experiments following~\cite{oord2017vqvae}).
(\romannumeral3) The length prediction loss trains the length predictor against the PIBA-derived target length via a standard cross-entropy classification objective:
\begin{equation}
\mathcal{L}_{\mathrm{len}} = -\sum_{k=1}^{K} \mathbbm{1}\bigl[k = \hat{L}(p_i)\bigr] \log u_i^{(k)}\,,
\label{eq:len_ce}
\end{equation}
where $\hat{L}(p_i)$ is the PIBA target length from Equation~\eqref{eq:final_assignment} treated as a categorical label, and $u_i^{(k)}$ is the predicted probability of length $k$. This loss directly teaches the predictor to map item content to the theoretically optimal length.

The full objective is as follows:
\begin{equation}
\mathcal{L} = \mathcal{L}_{\mathrm{recon}} + \mathcal{L}_{\mathrm{quant}} + \lambda_{\mathrm{len}} \mathcal{L}_{\mathrm{len}}\,.
\end{equation}

\subsubsection{Riemannian Optimization}
\label{sec:riemannian}

Codebook vectors reside on the Poincar\'{e} manifold, so standard gradient descent may produce updates that leave the valid region. We employ Riemannian Adam~\cite{becigneul2019riemannian}, which respects the manifold geometry. The Poincar\'{e} ball has Riemannian metric tensor $g_{\mathbf{x}} = \lambda_{\mathbf{x}}^2 \mathbf{I}_d$, where $\lambda_{\mathbf{x}} = 2/(1 - c\|\mathbf{x}\|^2)$ is the conformal factor. The Riemannian gradient is obtained by rescaling the Euclidean gradient,
\begin{equation}
\nabla_{\mathrm{R}} \mathcal{L} = \frac{1}{\lambda_{\mathbf{w}}^2} \nabla_{\mathrm{E}} \mathcal{L} = \frac{(1 - c\|\mathbf{w}\|^2)^2}{4} \nabla_{\mathrm{E}} \mathcal{L}\,,
\end{equation}
and parameter updates use the exponential map $\mathbf{w}_{t+1} = \exp_{\mathbf{w}_t}^c(-\eta \nabla_{\mathrm{R}} \mathcal{L})$ to remain on the manifold, where
\begin{equation}
\exp_{\mathbf{x}}^c(\mathbf{v}) = \mathbf{x} \Mob \Bigl( \tanh\Bigl( \tfrac{\sqrt{c} \lambda_{\mathbf{x}} \|\mathbf{v}\|}{2} \Bigr) \tfrac{\mathbf{v}}{\sqrt{c}\|\mathbf{v}\|} \Bigr)\,.
\end{equation}
For numerical stability we enforce a safety margin: if the updated parameter violates $\|\mathbf{w}_{t+1}\| < 1/\sqrt{c} - \epsilon$ (with $\epsilon = 10^{-5}$), we project it back via radial rescaling $\mathbf{w}_{t+1} \leftarrow \frac{1/\sqrt{c} - \epsilon}{\|\mathbf{w}_{t+1}\|} \mathbf{w}_{t+1}$, which preserves the angular direction while keeping all codebook vectors within the valid ball interior and preventing division by zero in the conformal factor $\lambda_{\mathbf{x}}$.

\subsection{Variable-Length ID Integration}
\label{sec:downstream}

After training HARQ, we integrate the variable-length semantic IDs into the downstream Transformer-based generative recommendation model. During training, given a user's interaction history as a sequence of item IDs, we train the model to predict the next item in an autoregressive, token-by-token manner. The model naturally handles variable-length targets by continuous generation until the EOS token is emitted.

However, during inference, variable-length IDs introduce several technical challenges. A primary issue is ID collisions, where distinct items are assigned identical token sequences. In fixed-length settings, collisions can be resolved by appending a disambiguation token from the main codebook without affecting other items. This fails in the variable-length regime because appending a token from $\{1, \dots, M\}$ to a short ID may create a new sequence that coincides with a naturally longer one (e.g., $(3,7,5)$), and the appended token would corrupt training of deeper codebook layers by being treated as a semantic signal. To avoid this, we reserve a disjoint auxiliary codebook $\{M+1, \dots, 2M\}$ exclusively for post-hoc disambiguation. When collisions occur, such as two items sharing $(3,7)$, we append unique tokens from this range, yielding $(3,7,M+1)$ and $(3,7,M+2)$. This ensures global uniqueness without interfering with semantic quantization.
To prevent hallucination during generation, we construct a prefix tree (Trie) $\mathcal{T}$ over all collision-resolved IDs. At each decoding step, we perform constrained beam search that only explores tokens corresponding to valid children in $\mathcal{T}$, ensuring every generated sequence maps to an actual item.

Another critical challenge unique to variable-length IDs is determining how many tokens to generate for the target item. Standard beam search must decide when to terminate, and a length-agnostic decoder may generate sequences whose length does not match the target item assigned by HARQ. To resolve this, we introduce a user-side length predictor that determines the target ID length before generation and constrains beam search to decode exactly that many tokens.

Concretely, given the user interaction history $\mathbf{z}_{S_u}$, the T5 encoder produces a sequence of contextualized representations; we take the first user embedding $\mathbf{g}_u$ from the encoder output as a summary of the user's interest. A lightweight classification head maps this embedding to a distribution over the $K$ possible target lengths:
\begin{equation}
\mathbf{v}_u = \mathrm{softmax}\bigl(\mathrm{MLP}_{\mathrm{tlen}}(\mathbf{g}_u)\bigr) \in \Delta^{K-1}\,,
\end{equation}
and the predicted target length is obtained as $\hat{L}_u = \arg\max_{k} v_u^{(k)}$. This head is trained jointly with the generative model using a cross-entropy loss against the HARQ-assigned length of the ground-truth target item $i^*$:
\begin{equation}
\mathcal{L}_{\mathrm{tlen}} = -\sum_{k=1}^{K} \mathbbm{1}\bigl[k = L_{i^*}\bigr] \log v_u^{(k)}\,,
\label{eq:tlen_ce}
\end{equation}
so that the generative training objective becomes $\mathcal{L}_{\mathrm{gen}} + \lambda_{\mathrm{tlen}} \mathcal{L}_{\mathrm{tlen}}$, where $\mathcal{L}_{\mathrm{gen}}$ is the standard next-token generation loss.

During inference, we first predict the target length $\hat{L}_u$ from the user embedding, and then perform Trie-constrained beam search that decodes exactly $\hat{L}_u$ tokens, terminating once the predicted length is reached. By fixing the decoding length in advance, all candidate IDs within a beam share the same length, so their log-probability scores are directly comparable without any length penalty or rescoring. This length-conditioned generation ensures that longer IDs assigned to tail items and shorter IDs assigned to popular items are each retrieved at their correct length, while keeping the scoring of competing candidates fair by construction.

\noindent\textbf{Deployment discussion.}
Our variable-length tokenization framework is designed with practical enterprise deployment in mind. For cold-start items lacking interaction history, the system conservatively assigns the maximum ID length to ensure sufficient semantic capacity for discrimination based on content features alone. In real-world systems where item popularity naturally evolves over time, enterprise deployments can adopt a periodic update strategy to re-allocate ID lengths based on accumulated interaction statistics. This update process only requires re-running the lightweight HARQ tokenization module while keeping the generative model frozen, making it computationally efficient for large-scale systems. The framework's reliance on semantic features ensures that items with shifting popularity or newly introduced items can be appropriately encoded as long as their content descriptions are available. This design strikes a practical balance between adaptivity and stability, making VarLenRec a viable solution for production recommender systems.

\begin{table*}[!t]
\centering
\caption{Overall performance comparison. The best and second-best results are marked in bold and underlined, respectively.}
\label{tab:results}
\setlength{\tabcolsep}{3pt}
\renewcommand{\arraystretch}{1.45}
\resizebox{\textwidth}{!}{%
\begin{tabular}{l|cccc|cccc|cccc|cccc}
\toprule
\multirow{2}{*}{\textbf{Method}} & \multicolumn{4}{c|}{\textbf{Beauty}} & \multicolumn{4}{c|}{\textbf{Sports and Outdoors}} & \multicolumn{4}{c|}{\textbf{Toys and Games}} & \multicolumn{4}{c}{\textbf{Yelp}} \\
\cmidrule(lr){2-5} \cmidrule(lr){6-9} \cmidrule(lr){10-13} \cmidrule(lr){14-17}
 & R@5 & R@10 & N@5 & N@10 & R@5 & R@10 & N@5 & N@10 & R@5 & R@10 & N@5 & N@10 & R@5 & R@10 & N@5 & N@10 \\
\midrule
\rowcolor{lightblue}
\multicolumn{17}{c}{\emph{\textbf{Item ID-based}}} \\
\midrule
HGN & 0.0238 & 0.0419 & 0.0140 & 0.0200 & 0.0200 & 0.0359 & 0.0130 & 0.0181 & 0.0224 & 0.0313 & 0.0143 & 0.0170 & 0.0132 & 0.0221 & 0.0084 & 0.0111 \\
GRU4Rec & 0.0238 & 0.0333 & 0.0168 & 0.0198 & 0.0080 & 0.0126 & 0.0054 & 0.0069 & 0.0213 & 0.0297 & 0.0143 & 0.0180 & 0.0184 & 0.0310 & 0.0115 & 0.0156 \\
BERT4Rec & 0.0268 & 0.0447 & 0.0197 & 0.0281 & 0.0148 & 0.0246 & 0.0088 & 0.0125 & 0.0246 & 0.0297 & 0.0174 & 0.0249 & 0.0189 & 0.0315 & 0.0111 & 0.0158 \\
SASRec & 0.0307 & 0.0508 & 0.0199 & 0.0264 & 0.0185 & 0.0298 & 0.0117 & 0.0153 & 0.0356 & 0.0540 & 0.0246 & 0.0304 & 0.0153 & 0.0246 & 0.0098 & 0.0126 \\
FMLP & 0.0391 & 0.0600 & 0.0209 & 0.0276 & 0.0205 & 0.0317 & 0.0109 & 0.0145 & 0.0448 & 0.0631 & 0.0239 & 0.0298 & 0.0219 & 0.0326 & 0.0125 & 0.0163 \\
S$^3$Rec & 0.0434 & 0.0723 & 0.0309 & 0.0441 & 0.0267 & 0.0445 & 0.0175 & 0.0250 & 0.0471 & 0.0785 & 0.0283 & 0.0405 & 0.0245 & 0.0408 & 0.0144 & 0.0206 \\
HSTU & 0.0358 & 0.0574 & 0.0229 & 0.0299 & 0.0232 & 0.0360 & 0.0152 & 0.0194 & 0.0370 & 0.0559 & 0.0260 & 0.0320 & 0.0231 & 0.0403 & 0.0145 & 0.0200 \\
\midrule
\rowcolor{lightblue}
\multicolumn{17}{c}{\emph{\textbf{Semantic ID-based}}} \\
\midrule
TIGER & 0.0418 & 0.0617 & 0.0276 & 0.0340 & 0.0293 & 0.0409 & 0.0190 & 0.0277 & 0.0402 & 0.0583 & 0.0273 & 0.0331 & 0.0253 & 0.0407 & 0.0164 & 0.0213 \\
LC-Rec & 0.0467 & 0.0685 & 0.0329 & 0.0394 & 0.0318 & 0.0471 & 0.0209 & 0.0301 & 0.0435 & 0.0638 & 0.0294 & 0.0385 & 0.0241 & 0.0381 & 0.0158 & 0.0201 \\
ETEGRec & 0.0452 & 0.0691 & 0.0312 & 0.0387 & 0.0305 & 0.0432 & 0.0198 & 0.0285 & 0.0423 & 0.0601 & 0.0295 & 0.0367 & 0.0261 & 0.0415 & 0.0172 & 0.0220 \\
LETTER-TIGER & 0.0431 & 0.0672 & 0.0286 & 0.0364 & 0.0311 & 0.0456 & 0.0205 & 0.0292 & 0.0418 & 0.0625 & 0.0288 & 0.0373 & 0.0277 & 0.0426 & 0.0184 & 0.0231 \\
LETTER-LCRec & 0.0501 & \underline{0.0703} & 0.0355 & \underline{0.0418} & 0.0338 & 0.0489 & 0.0221 & 0.0315 & 0.0447 & 0.0652 & \underline{0.0301} & 0.0398 & 0.0255 & 0.0393 & 0.0168 & 0.0211 \\
\rowcolor{black!10}
VarLenRec-TIGER & \textbf{0.0533} & \textbf{0.0709} & \textbf{0.0377} & \textbf{0.0421} & \textbf{0.0349} & \textbf{0.0498} & \textbf{0.0228} & \textbf{0.0322} & \underline{0.0479} & \underline{0.0801} & 0.0293 & \underline{0.0412} & \textbf{0.0298} & \textbf{0.0458} & \textbf{0.0192} & \textbf{0.0246} \\
\rowcolor{black!10}
VarLenRec-LCRec & \underline{0.0519} & 0.0696 & \underline{0.0365} & 0.0412 & \underline{0.0342} & \underline{0.0491} & \underline{0.0223} & \underline{0.0317} & \textbf{0.0503} & \textbf{0.0846} & \textbf{0.0309} & \textbf{0.0431} & \underline{0.0285} & \underline{0.0434} & \underline{0.0188} & \underline{0.0233} \\
\bottomrule
\end{tabular}
}
\end{table*}

\section{Experiments}

In this section, we conduct comprehensive experiments to answer the following research questions. \textbf{RQ1:} How does integrating VarLenRec into state-of-the-art generative recommendation frameworks improve performance compared to both traditional sequential recommenders and existing fixed-length generative baselines? \textbf{RQ2:} How do different components of VarLenRec contribute to its performance? \textbf{RQ3:} What is the impact of VarLenRec on semantic ID quality, including collisions, learned length distribution, and per-popularity performance? \textbf{RQ4:} How sensitive is VarLenRec to its key hyperparameters, and how efficient is it in terms of training, inference, and decoding steps?

\subsection{Experimental Setup}

\subsubsection{Datasets}
We evaluate our method on four widely-used benchmark datasets: Amazon Beauty, Sports and Outdoors, Toys and Games~\cite{ni2019justifying}, and Yelp. Following previous work~\cite{rajput2023tiger,wang2024letter}, we filter out users and items with fewer than 5 interactions.

\subsubsection{Baselines}
We evaluate VarLenRec by integrating it with generative recommendation backbones and comparing against three categories of baselines. The first category includes traditional sequential recommendation methods: HGN~\cite{ma2019hgn}, GRU4Rec~\cite{hidasi2016gru4rec}, BERT4Rec~\cite{sun2019bert4rec}, SASRec~\cite{kang2018sasrec}, FMLP~\cite{zhou2022fmlp}, S$^3$Rec~\cite{zhou2020s3rec}, and HSTU~\cite{zhai2024hstu}. The second category contains generative recommendation methods with fixed-length semantic IDs: TIGER~\cite{rajput2023tiger} and ETEGRec~\cite{liu2025etegrec}. The third category includes enhanced generative recommendation methods: LETTER~\cite{wang2024letter} which improves codebook learning, applied to both TIGER and LCRec~\cite{zheng2024lcrec} backbones.

\subsubsection{Evaluation Metrics}
We adopt two widely-used metrics, Recall@K (R@K) and NDCG@K (N@K), with K $\in \{5, 10\}$. All metrics are computed using the full ranking protocol without negative sampling.

\subsubsection{Implementation Details}
For dataset splitting, we follow the leave-one-out strategy: the most recent item for each user is used as the test set, the second most recent item as the validation set, and all remaining items as the training set.
For fair comparisons, we use the same text encoder (Sentence-T5) to extract item embeddings for all generative methods. The codebook size is set to 256 with a maximum semantic ID length of 10. The hyperbolic curvature parameter is set to $c = 1.0$. The hyperparameters $\beta$, $\lambda_{\mathrm{len}}$, and $\lambda_{\mathrm{tlen}}$ are analyzed in Section~\ref{sec:hyperparam} and their settings are based on the results on the validation datasets. The training process consists of two stages. In the HARQ tokenization stage, we use the Riemannian Adam optimizer with learning rate 1e-4 and batch size 256 to train HARQ together with its length predictor. In the generative recommendation stage, we use T5 as the backbone architecture and train the Transformer-based generative model together with the user-side length predictor using the standard Adam optimizer with learning rate 1e-4 and batch size 256, and use beam search with beam size 30 during inference. All experiments are conducted on NVIDIA A800 GPUs.

\subsection{Overall Performance (RQ1)}

Table~\ref{tab:results} presents the performance comparison across all datasets. VarLenRec-TIGER consistently outperforms the fixed-length TIGER baseline, and VarLenRec-LCRec surpasses LCRec across all datasets. These improvements demonstrate that our variable-length tokenization mechanism is orthogonal to existing generative frameworks and can effectively enhance different backbones. The gains are particularly pronounced on datasets with diverse item categories (Beauty and Toys), where items benefit most from adaptive encoding capacity allocation.

When integrated with generative backbones, VarLenRec outperforms all existing semantic ID-based methods. VarLenRec-TIGER achieves the best results on three datasets (Beauty, Sports, Yelp), while VarLenRec-LCRec performs best on Toys. Against ETEGRec and LETTER-TIGER, both VarLenRec variants show considerable improvements. These results validate that addressing the Popularity-Length Paradox through adaptive tokenization provides complementary benefits to existing optimization strategies, regardless of whether they focus on joint training or representation alignment.

VarLenRec substantially outperforms all traditional sequential recommendation methods. Compared to the best baseline S$^3$Rec and recent advanced methods like HSTU, our best VarLenRec variant achieves notable improvements across all datasets. These results demonstrate the advantages of the generative paradigm with discrete semantic IDs, which enable explicit modeling of item semantics and hierarchical relationships. The variable-length mechanism further amplifies these advantages by tailoring encoding capacity to item-specific information requirements.

\subsection{Ablation Study }
To investigate the contribution of each component in VarLenRec, we conduct comprehensive ablation studies on VarLenRec-TIGER. We examine three key design choices: (1) the PIBA-based length prediction loss $\mathcal{L}_{\mathrm{len}}$, (2) the hyperbolic embedding space, and (3) the inference integration strategies (collision resolution, Trie-constrained decoding, and user-side length prediction). The results on all four datasets are shown in Table~\ref{tab:ablation}.
 
\begin{table*}[t]
\centering
\caption{Ablation study results across all four datasets. We ablate three key components: the length prediction objective $\mathcal{L}_{\mathrm{len}}$, hyperbolic space, and inference integration strategies.}
\label{tab:ablation}
\setlength{\tabcolsep}{12pt}
\renewcommand{\arraystretch}{0.95}
\resizebox{\textwidth}{!}{%
\begin{tabular}{l c c @{\hspace{3.2em}} c c @{\hspace{3.2em}} c c @{\hspace{3.2em}} c c}
\toprule
\multirow{2}{*}{\textbf{Variants}} & \multicolumn{2}{c}{\textbf{Beauty}} & \multicolumn{2}{c}{\textbf{Sports}} & \multicolumn{2}{c}{\textbf{Toys}} & \multicolumn{2}{c}{\textbf{Yelp}} \\
\cmidrule(lr){2-3} \cmidrule(lr){4-5} \cmidrule(lr){6-7} \cmidrule(lr){8-9}
 & R@10 & N@10 & R@10 & N@10 & R@10 & N@10 & R@10 & N@10 \\
\midrule
\rowcolor{black!10}
VarLenRec-TIGER & \textbf{0.0709} & \textbf{0.0421} & \textbf{0.0498} & \textbf{0.0322} & \textbf{0.0801} & \textbf{0.0412} & \textbf{0.0458} & \textbf{0.0246} \\
\midrule
\multicolumn{9}{l}{\textit{Fixed Length Baselines (w/o $\mathcal{L}_{\mathrm{len}}$ \& Length Predictor) }} \\
\quad Fixed Length = 4 & 0.0618 & 0.0356 & 0.0421 & 0.0271 & 0.0579 & 0.0327 & 0.0372 & 0.0201 \\
\quad Fixed Length = 6 & 0.0649 & 0.0378 & 0.0447 & 0.0289 & 0.0608 & 0.0341 & 0.0391 & 0.0212 \\
\quad Fixed Length = 8 & 0.0627 & 0.0364 & 0.0433 & 0.0279 & 0.0633 & 0.0353 & 0.0418 & 0.0227 \\
\quad Fixed Length = 10 & 0.0593 & 0.0332 & 0.0409 & 0.0264 & 0.0566 & 0.0319 & 0.0402 & 0.0218 \\
\multicolumn{9}{l}{\textit{Embedding Space Ablation}} \\
\quad w/ Euclidean Space & 0.0631 & 0.0372 & 0.0476 & 0.0307 & 0.0768 & 0.0396 & 0.0421 & 0.0224 \\
\midrule
\multicolumn{9}{l}{\textit{Inference Integration Ablation}} \\
\quad w/o Inference Strategies & 0.0667 & 0.0391 & 0.0451 & 0.0292 & 0.0739 & 0.0381 & 0.0439 & 0.0235 \\
\midrule
\multicolumn{9}{l}{\textit{Length Prediction Ablation}} \\
\quad w/ Direct PIBA Assignment & 0.0684 & 0.0403 & 0.0468 & 0.0301 & 0.0701 & 0.0359 & 0.0430 & 0.0229 \\
\bottomrule
\end{tabular}
}
\end{table*}
 
\noindent\textbf{Fixed Length Comparison.} Across all four datasets, no single fixed length is optimal: Beauty and Sports peak at a medium length (6), whereas Toys and Yelp peak at a longer length (8), consistent with the latter catalogs needing finer-grained codes. In every case the best fixed length still trails our adaptive approach by a clear margin, validating our hypothesis that items require different semantic granularities.
 
\noindent\textbf{Length Prediction Ablation.} Replacing the learned length predictor with the fixed PIBA-assigned lengths lowers NDCG@10 by 4.3\%, 6.5\%, 12.9\%, and 6.9\% on Beauty, Sports, Toys, and Yelp, respectively. The effect is highly dataset-dependent: on Toys it is by far the most damaging ablation, indicating that content-aware length prediction is the dominant ingredient for catalogs whose items demand fine-grained codes, while on Beauty its removal is comparatively mild.
 
\noindent\textbf{Embedding Space Ablation.} This variant is identical to full VarLenRec---retaining the Length Predictor and the PIBA-based length objective $\mathcal{L}_{\mathrm{len}}$---and only replaces the hyperbolic operations with standard Euclidean RQ-VAE quantization, thereby isolating the effect of geometry alone. It reduces Recall@10 by 11.0\% on Beauty and 8.1\% on Yelp, but only 4.4\% on Sports and 4.1\% on Toys. The hyperbolic geometry is thus the single most important component on Beauty and Yelp, whereas it plays a more supporting role on Sports and Toys, showing that geometry and length prediction contribute complementary benefits whose relative weight shifts with catalog structure.
 
\noindent\textbf{Inference Integration Ablation.} Removing all three inference-time strategies (collision resolution, Trie-constrained decoding, and user-side length prediction) simultaneously reduces Recall@10 by 5.9\%, 9.4\%, 7.7\%, and 4.1\% on Beauty, Sports, Toys, and Yelp. On Sports this is the most damaging of the three component ablations, indicating that correct length-conditioned decoding matters most for that catalog, while on the other datasets it provides a consistent, moderate contribution to the accurate and efficient retrieval of variable-length IDs during inference.

\subsection{Semantic ID Quality Analysis (RQ3)}
\label{sec:semantic_analysis}

Beyond overall performance, we analyze VarLenRec's impact on semantic ID quality through collision rates, length distribution, and performance across popularity groups.

\subsubsection{Collision Rate Analysis}

A critical quality metric for semantic IDs is the collision rate, i.e., the percentage of item pairs assigned identical ID sequences. Figure~\ref{fig:collision} compares collision rates between the standard RQ-VAE baseline (TIGER with fixed length L=10), our method with Euclidean space, and our full VarLenRec method across four datasets.

\begin{figure}[t]
\centering
\includegraphics[width=\columnwidth]{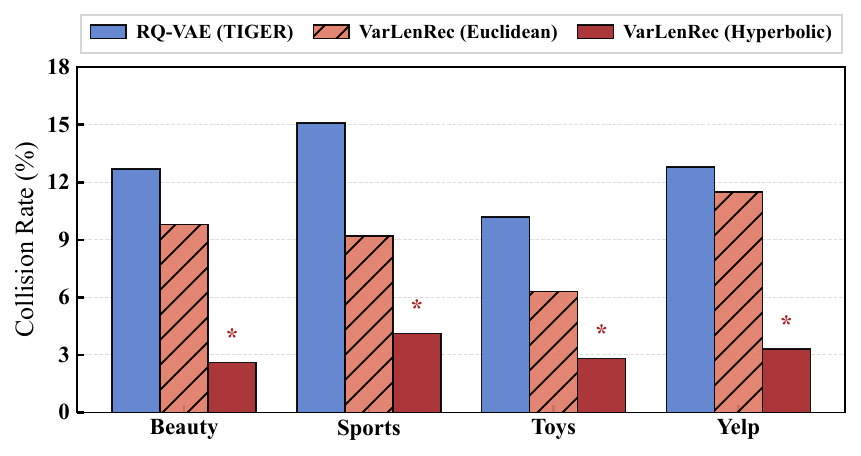}
\caption{ID collision rates (\%) across four datasets.
Our full VarLenRec with HARQ consistently achieves the lowest collision rates.}
\label{fig:collision}
\end{figure}

The full VarLenRec model with HARQ substantially reduces ID collisions compared to the fixed-length RQ-VAE baseline and the Euclidean variant. Across the four datasets, collision rates drop to a very low level (typically below 7\%), demonstrating that the combination of variable-length allocation and hyperbolic geometry effectively increases the expressiveness and distinctiveness of the learned semantic IDs. On average, collision rates drop from 12.7\% to 3.2\%.

\subsubsection{Semantic ID Length Distribution}

Figure~\ref{fig:length_dist} shows the distribution of learned semantic ID lengths across the four datasets, with a maximum length of 10.
The learned lengths vary across datasets according to how much semantic detail their items require, with most items receiving moderate-length IDs and the average length shifting from one catalog to another.

\begin{figure}[t]
\centering
\includegraphics[width=\columnwidth]{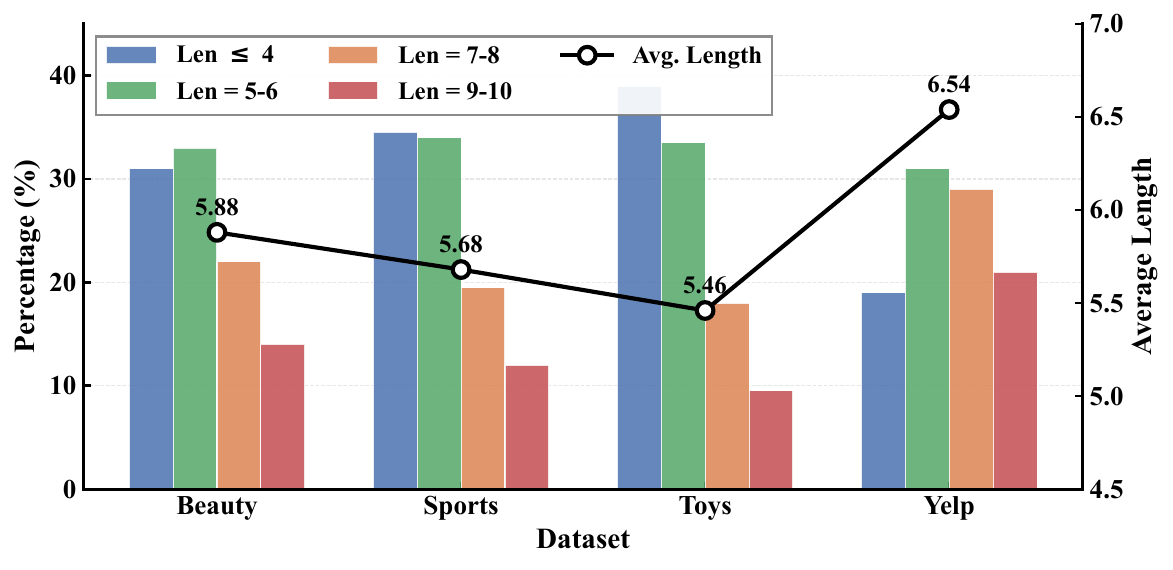}
\caption{Distribution of learned semantic ID lengths by VarLenRec across the four datasets (maximum length = 10). The bars show the percentage of items in each length range, while the line plot indicates the average length for each dataset.}
\label{fig:length_dist}
\end{figure}

Among the datasets, Yelp exhibits the highest average length (6.54), with the largest share of long IDs (50.0\% with length $\geq$ 7), which aligns with the greater semantic diversity of business entities compared to product categories.
In contrast, Toys shows the shortest average length (5.46) and the largest proportion of short IDs (39.0\% with length $\leq$ 4), reflecting its relatively standardized product categories.
Sports (5.68) and Beauty (5.88) fall between these two extremes, with Beauty skewing somewhat longer (36.0\% of items at length $\geq$ 7) than Sports (31.5\%), consistent with Beauty's more diverse product descriptions. The consistent gap between datasets confirms that VarLenRec adapts its length budget to each catalog rather than converging to a single global length.
This pattern suggests that VarLenRec adaptively assigns longer IDs to items requiring more fine-grained semantic distinctions, while simpler or more homogeneous items receive shorter codes.

\subsubsection{Performance Scaling by Popularity}

To understand how VarLenRec benefits different item groups, we analyze performance across popularity tiers with varying maximum semantic ID lengths. Figure~\ref{fig:pop_scaling} presents NDCG@10 results on the Beauty dataset.

\begin{figure}[t]
\centering
\includegraphics[width=\columnwidth]{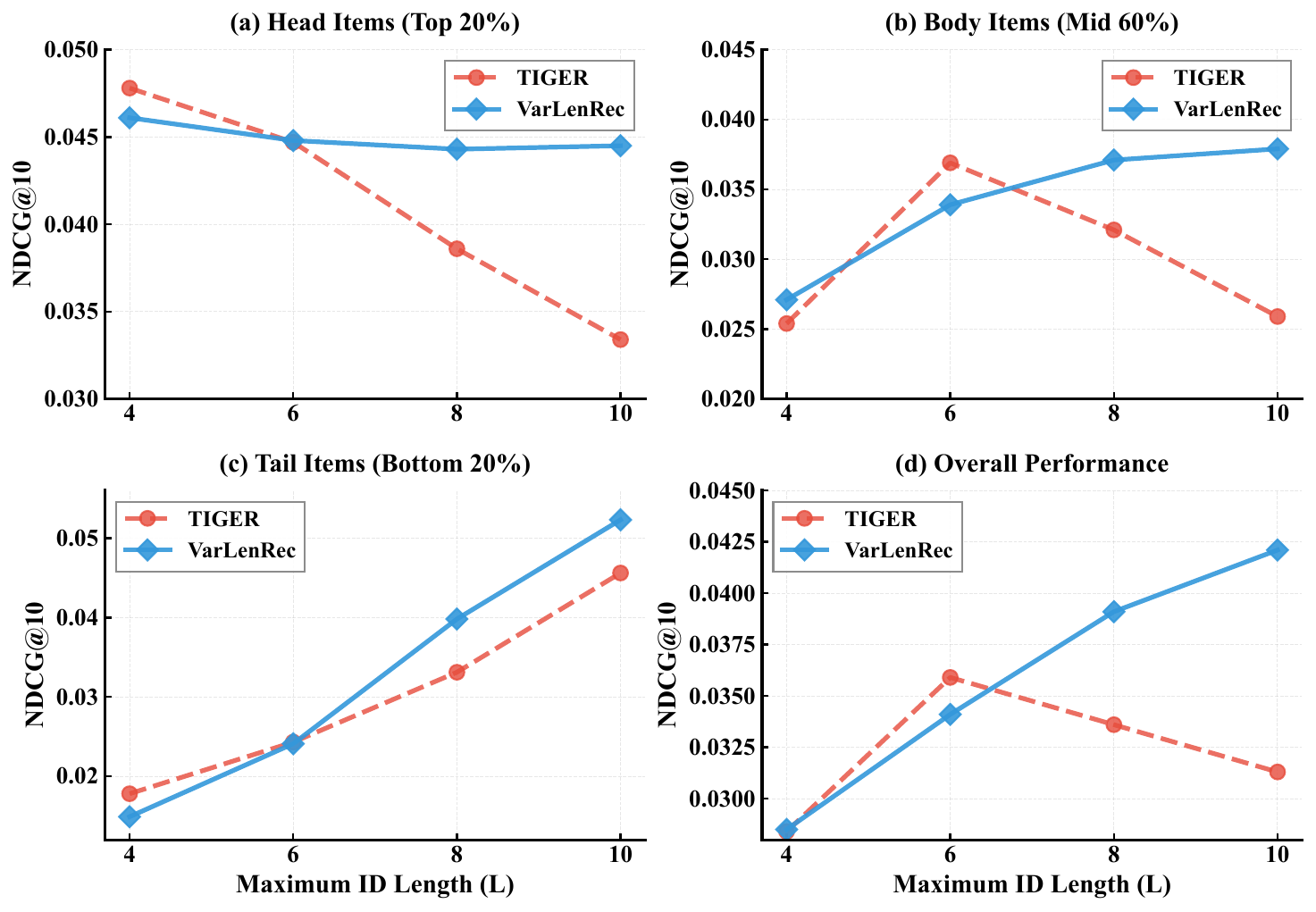}
\caption{Performance scaling analysis by item popularity across maximum ID lengths on Beauty dataset. VarLenRec mitigates the Popularity-Length Paradox observed in TIGER: (a) head items retain a strong short-length advantage rather than collapsing, (b) body items show consistent improvement, (c) tail items achieve substantial gains as the maximum length grows, and (d) overall performance rises with longer maximum budgets, demonstrating the effectiveness of adaptive length allocation.}
\label{fig:pop_scaling}
\end{figure}

TIGER exhibits the Popularity-Length Paradox: head items peak at short lengths (L=4: 0.0478) but degrade 30.1\% at L=10, while tail items show opposite behavior, improving 156.2\% from L=4 to L=10. No single length optimizes all groups simultaneously.

VarLenRec mitigates this paradox through adaptive length allocation. Because each item receives its own length, head items stay consistently strong across maximum lengths (0.0461 at L=4) rather than collapsing as in TIGER, while tail items gain +251.0\% and body items +39.9\% as the maximum length grows to L=10. As a result, overall performance climbs +47.7\% from L=4 to L=10, validating Theorem~\ref{thm:optimal}'s principle that encoding capacity should adapt to item-specific information requirements.

\subsection{Hyperparameter Sensitivity (RQ4)}
\label{sec:hyperparam}

We investigate VarLenRec's sensitivity to three key hyperparameters: the parameter $\beta$ in PIBA (Equation~\eqref{eq:final_assignment}), the length prediction weight $\lambda_{\mathrm{len}}$ in HARQ, and the user-side length prediction weight $\lambda_{\mathrm{tlen}}$ in the generative stage. Figure~\ref{fig:hyperparam} shows NDCG@10 across varying values on Beauty and Toys datasets.

\begin{figure}[t]
\centering
\includegraphics[width=\columnwidth]{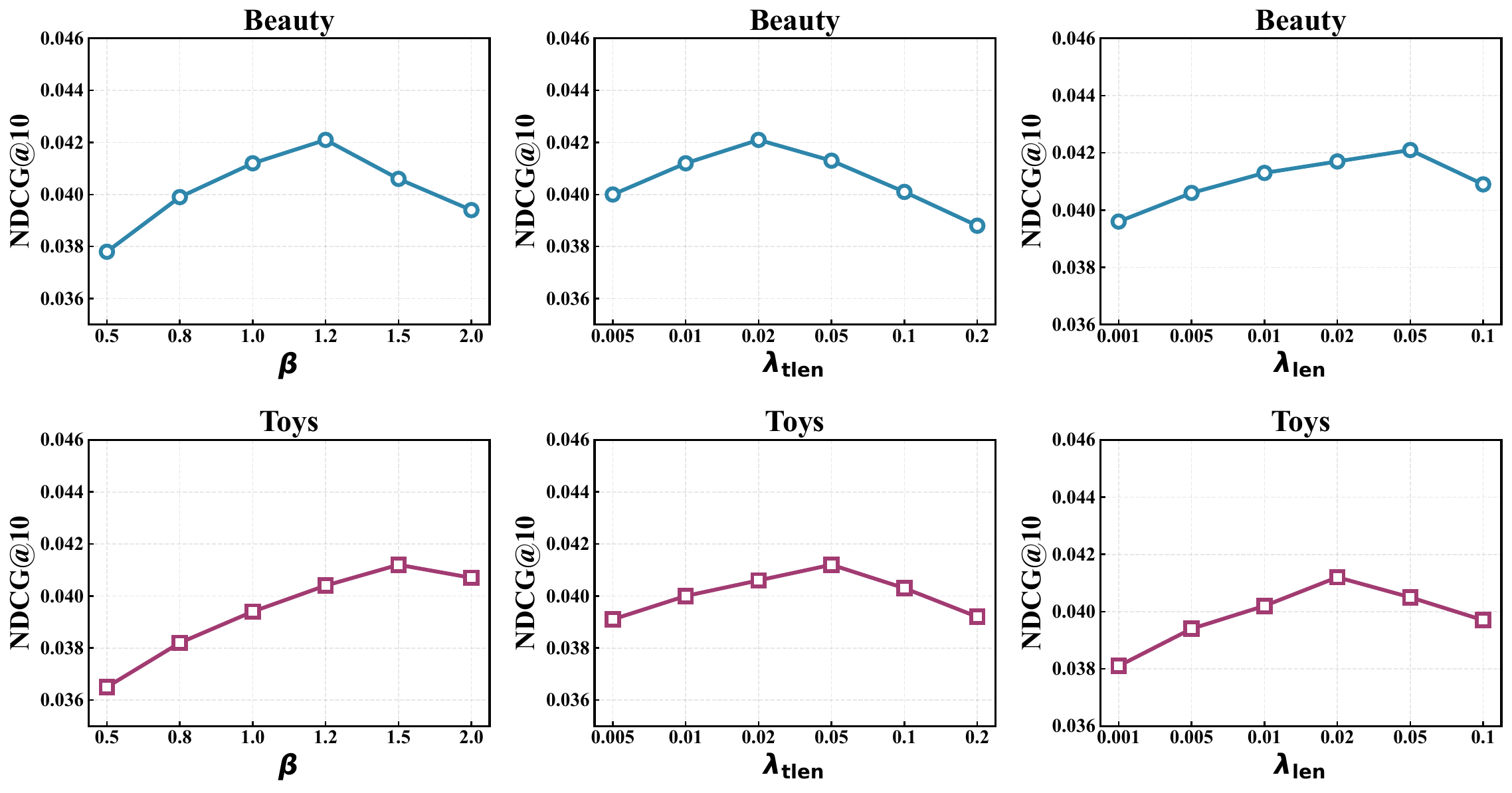}
\caption{Hyperparameter sensitivity analysis.}
\label{fig:hyperparam}
\end{figure}

\noindent\textbf{Parameter $\beta$.} Beauty achieves optimal performance at $\beta = 1.2$ with a clear peak pattern, while Toys peaks at $\beta = 1.5$ and maintains relatively stable performance at higher values.

\noindent\textbf{Length Prediction Weight $\lambda_{\mathrm{len}}$.} Beauty reaches its optimal performance at $\lambda_{\mathrm{len}} = 0.05$, while Toys achieves optimal performance earlier at $\lambda_{\mathrm{len}} = 0.02$. Too small values provide insufficient supervision for the length predictor, leaving length determination poorly aligned with the PIBA prior, while too large values over-emphasize length classification at the expense of reconstruction and quantization quality.

\noindent\textbf{User-side Length Prediction Weight $\lambda_{\mathrm{tlen}}$.} Beauty optimizes at $\lambda_{\mathrm{tlen}} = 0.02$, while Toys benefits from stronger regularization at $\lambda_{\mathrm{tlen}} = 0.05$. Very small values yield an unreliable target-length predictor that frequently decodes IDs at the wrong length, while excessive values divert capacity from the generation objective. The divergent optimal configurations across datasets demonstrate that hyperparameters should be tuned to specific catalog characteristics for optimal performance.

\subsection{Efficiency Analysis (RQ4)}

\noindent\textbf{Complexity.}
All RQ-based tokenizers (TIGER, LETTER, ETEGRec) share the dominant $O(NKMd)$ quantization cost; our length predictor adds only a lower-order $O(NKd_g)$ term ($d_g\!\ll\!Md$) for a single forward pass of a lightweight classifier. The length decision is a one-shot $\arg\max$ over $K$ classes with no search or backtracking. Since tokenization is a one-time offline step, this overhead is amortized to zero at serving time. The actual speedup comes from generation: with the user-side length predictor fixing the decoding length to $\hat{L}_u$, beam search costs $O(b\bar{L}|\mathcal{V}|)$ with average length $\bar{L}\!<\!K$, versus $O(bK|\mathcal{V}|)$ for fixed-length baselines, since popular items receive short IDs and terminate early.

\begin{table}[!t]
\centering
\caption{Training and testing efficiency comparison (seconds per epoch) across the four datasets. VarLenRec consistently achieves faster training and inference.}
\label{tab:efficiency}
\resizebox{\columnwidth}{!}{%
\begin{tabular}{lcccccccc}
\toprule
Method          & \multicolumn{2}{c}{Beauty} & \multicolumn{2}{c}{Sports} & \multicolumn{2}{c}{Toys} & \multicolumn{2}{c}{Yelp} \\
\cmidrule(lr){2-3} \cmidrule(lr){4-5} \cmidrule(lr){6-7} \cmidrule(lr){8-9}
                & Train & Test  & Train & Test  & Train & Test  & Train & Test  \\
\midrule
TIGER           & 37.1  & 87.2  & 45.3  & 106.8 & 35.8  & 82.4  & 52.6 & 124.3 \\
LETTER-TIGER    & 40.4  & 92.4  & 49.2  & 113.5 & 38.9  & 87.6  & 57.1 & 131.8 \\
\midrule
\rowcolor{black!10}
VarLenRec-TIGER & \textbf{30.4} & \textbf{63.7} & \textbf{39.9} & \textbf{81.2} & \textbf{32.6} & \textbf{70.0} & \textbf{37.9} & \textbf{83.3} \\
\bottomrule
\end{tabular}
}
\end{table}

Table~\ref{tab:efficiency} shows that VarLenRec-TIGER consistently achieves the fastest training and inference speeds across all datasets. On average, VarLenRec reduces training time by 16.7\% and testing time by 24.7\% compared to TIGER. The savings vary across datasets in line with their characteristics: Yelp, whose items carry the longest semantic IDs, benefits most (up to 33.0\% faster inference), whereas Toys, with the shortest IDs, sees more modest gains. This efficiency gain stems from the variable-length mechanism: popular items are assigned shorter IDs, substantially reducing the average number of tokens processed during both training and generation. The speedup is particularly pronounced during inference, where the predicted target length lets short IDs terminate early and eliminates unnecessary decoding steps. These computational savings make VarLenRec attractive for large-scale deployment scenarios where both training cost and inference latency are critical.

\subsection{Decoding-Step Efficiency (RQ4)}
\label{sec:decoding}

Beyond wall-clock time, variable-length tokenization directly reduces the number of autoregressive decoding steps at inference. Since a fixed-length tokenizer must emit $K$ tokens for every item regardless of popularity, while VarLenRec emits only $L_i$, the decoding cost scales with the average ID length rather than the maximum. Because autoregressive generation produces one token per step, the average ID length is exactly the average number of decoding steps per item. Table~\ref{tab:decoding} reports, for each dataset, the average decoding length broken down by popularity tier, together with the reduction in decoding steps relative to a fixed-length $L=10$ baseline.

\begin{table}[!t]
\centering
\caption{Decoding-step efficiency of VarLenRec. We report the average ID length (equivalently, the average number of decoding steps) per popularity tier (Head/Body/Tail), the overall average, and the reduction in decoding steps versus a fixed-length ($L=10$) baseline. Shorter IDs for popular items sharply cut the per-item decoding work.}
\label{tab:decoding}
\resizebox{\columnwidth}{!}{%
\begin{tabular}{l|ccc|c|c}
\toprule
\multirow{2}{*}{\textbf{Dataset}} & \multicolumn{3}{c|}{\textbf{Avg. Decoding Steps per Tier}} & \multirow{2}{*}{\shortstack{\textbf{Avg.}\\\textbf{Steps}}} & \multirow{2}{*}{\shortstack{\textbf{Step}\\\textbf{Red.}}} \\
\cmidrule(lr){2-4}
 & Head & Body & Tail & & \\
\midrule
Beauty & 3.1 & 6.03 & 8.2 & 5.88 & -41.2\% \\
Sports & 3.6 & 5.90 & 7.1 & 5.68 & -43.2\% \\
Toys   & 3.3 & 5.73 & 6.8 & 5.46 & -45.4\% \\
Yelp   & 4.5 & 6.43 & 8.9 & 6.54 & -34.6\% \\
\bottomrule
\end{tabular}
}
\end{table}

VarLenRec lowers the average number of decoding steps from the fixed $K=10$ to between 5.46 (Toys) and 6.54 (Yelp), a $35$--$45\%$ reduction in per-item decoding work whose magnitude varies across datasets according to how much semantic detail each catalog requires. The tier breakdown reveals where this saving originates: head items, which dominate inference traffic, require only $3.1$--$4.5$ steps on average, so the bulk of queries terminate early, whereas the longer codes (up to $8.9$ steps for Yelp tail items) are reserved for the comparatively rare tail items that genuinely need them. This is precisely the asymmetry predicted by PIBA (Theorem~\ref{thm:optimal}) and confirms that the efficiency gains are a structural consequence of popularity-aware length allocation rather than a uniform truncation. Together with the latency results in Table~\ref{tab:efficiency}, this analysis shows that VarLenRec improves recommendation accuracy and inference efficiency simultaneously, rather than trading one for the other.

\section{Conclusion and Future Work}
\label{sec:conclusion}

In this work, we identified a fundamental limitation in generative recommendation: uniform-length tokenization fails to account for heterogeneous information requirements across items with different popularity levels. Through systematic empirical analysis, we discovered the Popularity-Length Paradox, where popular items achieve optimal performance with short IDs while tail items require longer codes. To address this, we proposed VarLenRec, a principled framework for learning variable-length semantic IDs. Our theoretical contribution, Popularity-Weighted Information Budget Allocation (PIBA), establishes that optimal ID length should scale as a negative power of popularity, and our geometric analysis shows that hyperbolic space provides exponential, distortion-free capacity stratification matched to this allocation. Building on this foundation, we introduced HARQ to stratify encoding capacity and a Length Predictor that maps item content to its target length. Extensive experiments demonstrate that VarLenRec achieves substantial improvements in recommendation accuracy, collision reduction, and computational efficiency, validating that adaptive encoding capacity is essential for effective generative recommendation.

Our work opens several promising directions for future research. Extending variable-length tokenization to multimodal and cross-domain settings could further enhance semantic expressiveness. Developing adaptive allocation strategies beyond popularity, such as category-specific or temporal patterns, may yield additional gains. Investigating the interplay between variable-length IDs and emerging large language model architectures presents an exciting avenue for next-generation recommender systems. We hope VarLenRec serves as a foundation for adaptive tokenization in generative recommendation and inspires future work on personalized encoding strategies.

\bibliographystyle{IEEEtran}
\bibliography{main}

\end{document}